\documentclass{article}

\usepackage[preprint]{corl_2026} 

\usepackage{graphicx} 
\usepackage{subcaption}
\usepackage{nicematrix}
\usepackage{wrapfig}
\usepackage{caption}
\usepackage{algorithm}
\usepackage{algorithmic}
\usepackage{float}
\usepackage{amssymb}
\usepackage{amsmath}
\usepackage{dsfont}
\usepackage{amsthm}
\usepackage{bm}
\usepackage{mathtools}
\usepackage{xcolor}
\usepackage{xspace}
\usepackage[toc]{appendix}
\newtheorem{assumption}{Assumption}
\newtheorem{definition}{Definition}
\newtheorem{lemma}{Lemma}
\newtheorem{theorem}{Theorem}

\newtheorem{example}{Example}

\usepackage{paralist}
\usepackage{bbm}
\usepackage{booktabs}

\usepackage{booktabs}
\usepackage{float}
\usepackage{array}
\usepackage[table]{xcolor}
\usepackage{caption}
\usepackage{adjustbox}

\definecolor{TableRuleGrey}{gray}{0.82}
\definecolor{AdaptiveBlue}{RGB}{244,248,255}

\newcommand{\ElegantTableSetup}{%
  \arrayrulecolor{TableRuleGrey}%
  \setlength{\heavyrulewidth}{0.55pt}%
  \setlength{\lightrulewidth}{0.35pt}%
  \setlength{\cmidrulewidth}{0.35pt}%
  \renewcommand{\arraystretch}{1.14}%
}

\title{When Should a Robot Replan? Regret-Guided Update Scheduling in Time-Varying MDPs}

\author{ Negin Musavi, Gokul Puthumanaillam, \\[0.5em] \textbf{Ruben Hernandez, William Schafer, and Melkior Ornik}\\[1.0em] {\small Department of Aerospace Engineering,}\\ {\small University of Illinois Urbana--Champaign, Champaign, IL 61820, USA} \\ {\tt\small nmusavi2@illinois.edu, gokulp2@illinois.edu,} \\ {\tt\small rubenjh2@illinois.edu, wes6@illinois.edu, mornik@illinois.edu} }

\begin{document}
\maketitle


\begin{abstract}
Robots operating in non-stationary environments must continually adapt their policies as the dynamics drift, but onboard energy and compute budgets cap how often a full state estimation and re-planning step can
be performed. This raises a question: \emph{when}, along a horizon, should a robot spend its limited budget? We formulate this problem in time-varying Markov decision processes (TVMDPs) with a known bound on the rate of transition drift. We model execution as a \emph{skip-update} scheme in which, at chosen update times, the agent estimates the transition kernel by maximum likelihood and computes a finite-horizon policy, and between updates reuses this policy under a propagated state estimate. We analyze the dynamic regret of this scheme and show how it
grows during skip intervals in terms of the properties of the TVMDP and the skip lengths; the resulting bound answers the opening question via an online, regret-guided update rule that allocates the budget adaptively. We evaluate the rule in a simulated Mars-rover navigation task with time-varying slip dynamics and on a Crazyflie quadrotor in indoor obstacle fields. Adaptive allocation outperforms other budgeted baselines.
\vspace{-0.5em}
\end{abstract}

\keywords{Time-varying MDPs, Resource-aware robot planning} 



\section{Introduction}\label{sec:intro}
\vspace{-0.75em}
Autonomous robots are increasingly deployed in non-stationary
environments. Examples arise across robotics: marine vehicles navigating through time-varying ocean currents~\citep{liu2018solution,
duckworth2021time, zhang2023realtime}, aerial vehicles operating in
changing wind fields~\citep{liu2018solution}, and mobile service robots planning navigation actions through the ebb and flow of human activity~\citep{pulidofentanes2015nowlater}. In such settings, the robot's policy needs to adapt as the dynamics evolve. At the same time, robots operate under limited onboard resources---energy, sensing, and computation---that restrict how often a full state estimation and re-planning step can be carried out. More frequent updates improve adaptation to current conditions but consume more of the available budget, while less frequent updates conserve resources at the cost of
operating on an outdated model. A natural question in this setting is:
\vspace{-0.5em}
\begin{center}
\emph{How should the limited update budget be allocated over time when the dynamics change?}
\end{center}
\vspace{-0.5em}

TVMDPs provide a natural framework for sequential decision-making under evolving dynamics. Existing work broadly divides into two settings depending on whether the transition model is available to the agent or must be learned from interaction. When the transition model is known or can be forecast, the focus is on planning rather than estimation. \citet{liu2018solution} introduce the TVMDP framework and develop a spatial-temporal value propagation method for robots navigating under known time-varying dynamics. \citet{duckworth2021time} extend this to semi-MDPs with GP-learned dynamics and time-bounded mission specifications. \citet{zhang2024predictive} study a model predictive dynamic 
programming approach with look-ahead forecasts, showing regret decreases exponentially with the forecast horizon under a finite-time mixing condition.

When the transition kernels are unknown, two estimation strategies have been pursued. One uses forgetting mechanisms --- sliding windows, periodic restarts, 
or confidence widening --- to track drift while planning optimistically: \citet{cheung2020reinforcement} introduce confidence widening under bounded 
variation budgets, \citet{chen2022near} refine this for stochastic shortest paths with separate restart windows for costs and transitions, \citet{wei2021non} remove the need for prior knowledge of the variation budget via a black-box 
meta-algorithm, and \citet{damoptimal} show that low-rank structure in the drift improves regret rates. A complementary strategy imposes the drift bound directly as a constraint in the maximum likelihood estimation step without 
discarding historical data; \citet{ornik2021learning} develop this for TVMDPs with unknown transitions, and \citet{puthumanaillam2024weathering} extend it 
to partially observable settings. A separate line of work studies reward nonstationarity with fixed or revealed transition dynamics: \citet{fei2020dynamic} study policy optimization under adversarially changing rewards with fixed 
unknown transitions, while \citet{li2019online, li2019online2} consider the setting where both transitions and rewards are revealed after each step and use online value iterations to adapt to the changing environment. Across all of these lines, a common implicit assumption is that the agent observes its state and recomputes its policy at every time step. The question 
of how a fixed budget of updates should be allocated over time has not been addressed.

In this paper, we consider TVMDPs with unknown transition kernels but known drift bounds, and study how an agent with a limited budget---one that caps the total number of state observations and replanning steps over the horizon---should allocate that budget. We model execution as a \emph{skip-update} scheme in which, at chosen update times, the agent estimates the transition kernel by maximum likelihood and computes a finite-horizon policy, and between updates reuses this policy under a propagated state estimate. We analyze the dynamic regret of this scheme and show how it grows during skip intervals as a function of local drift and skip length; the resulting
bound motivates an online, regret-guided update rule that allocates the budget adaptively. The closest related work is~\citep{lee2024pausing}, which also studies reinforcement learning with intermittent policy
updates in non-stationary environments; our work differs in that it
exploits bounded temporal variation in the estimation step, couples
estimation with planning, and derives the update schedule from a regret bound.
\vspace{-0.8em}
\paragraph{Contributions.} Our contributions are threefold: 
(i)~a formulation of the budgeted TVMDP problem and a skip-update 
algorithm that estimates dynamics and plans only at a subset of time 
steps, executing under a propagated state estimate between updates; 
(ii)~a dynamic regret bound that decomposes into update-time and 
skip-interval contributions, with the skip-interval 
analysis---characterizing how regret accumulates under a stale policy 
and propagated state estimate---as the first technical contribution; 
and (iii)~an online, regret-guided update rule derived directly from 
this bound that allocates the available budget adaptively, as the 
second technical contribution. We evaluate the rule in simulated 
Mars-rover navigation with time-varying slip dynamics and on a 
Crazyflie quadrotor in an indoor obstacle field, showing adaptive allocation outperforms budgeted baselines.
\vspace{-0.8em}
\paragraph{Outline.} Section~\ref{sec:problem} formalizes the
TVMDP setting and the skip-update algorithm.
Section~\ref{sec:main_results} presents the dynamic regret
analysis and the adaptive update rule.
Section~\ref{sec:experiments} reports simulation and hardware
experiments. Section~\ref{sec:conclusion} provides concluding
remarks and discusses limitations.

\section{Problem Formulation}\label{sec:problem}

We consider a finite-horizon TVMDP $\mathcal{M} = \big(\mathcal{S}, \mathcal{A}, T, \{P_t\}_{t=0}^{T-1}, \{r_t\}_{t=0}^{T-1}\big)$, where $\mathcal{S}$ and $\mathcal{A}$ denote finite state and action spaces, $T \in \mathbb{N}$ is the horizon length, $P_t : \mathcal{S}\times\mathcal{A}\times\mathcal{S} \to [0,1]$ is the transition kernel at time $t$, satisfying $\sum_{s' \in \mathcal{S}} P_t(s' \mid s,a) = 1$ for all $(s,a)\in\mathcal{S}\times\mathcal{A}$, and $r_t : \mathcal{S}\times\mathcal{A} \to \mathbb{R}$ is the reward function at time $t$. The transition kernels satisfy the bounded drift condition $|P_{t+1}(s' \mid s,a) - P_t(s' \mid s,a)| \le \varepsilon_t$ for all $s,s'\in\mathcal{S}$, $a\in\mathcal{A}$, and $t\ge0$, where $\{\varepsilon_t\}_{t\ge 0}$ bounds the rate of temporal variation of the dynamics. At each time step $t$, the agent selects an action $a_t \in \mathcal{A}$, transitions to $s_{t+1} \sim P_t(\cdot \mid s_t, a_t)$, and receives reward $r_t(s_t, a_t)$. We assume that the transition kernels $\{P_t\}_{t\geq 0}$ are unknown to the agent, whereas the variation bounds $\{\varepsilon_t\}_{t\geq 0}$ and the reward functions $\{r_t\}_{t\geq 0}$ are known.

\begin{example}[Mars-rover navigation]\label{ex:mars_rover}
A Mars-rover wants to reach a goal cell from a designated start on a discretized terrain grid. The state $s_t \in \mathcal{S}$ encodes the rover's grid cell, and the action $a_t \in \mathcal{A}$ is a commanded move (right, left, up, down, or stay). Motion is stochastic: the rover's wheels can slip on loose soil or sloped terrain, so a commanded action may land the rover in an unintended adjacent cell. The transition kernel $P_t(\cdot \mid s, a)$ assigns high probability to the commanded outcome and distributes the remaining mass across slip outcomes, with slip rates shaped by local terrain conditions such as soil composition and illumination~\cite{helmick2009terrain}. Because these conditions vary slowly along the traverse, $P_t$ drifts gradually with time at a rate bounded by $\{\varepsilon_t\}_{t \geq 0}$. The reward function $r_t$ penalizes states farther from the goal. In addition, the rover operates under a strict energy budget that limits both onboard computation and information acquisition , making it a natural instance of the budgeted update setting formalized below.
\end{example}

The agent is allowed to observe the realized transition $(s_{t-1}, a_{t-1}, s_t)$ at selected time steps $t \in \mathcal{T}_{\mathrm{upd}} \subset \{1,\dots,T-1\}$, and update its model and policy only at those times, seeking policies that maximize the expected cumulative reward subject to a cap on the total number of such updates. Concretely, the agent seeks a sequence of deterministic policies $\{\pi_t\}_{t=0}^{T-1}$\footnote{We restrict attention to deterministic policies; this is without loss of generality since finite-horizon MDPs admit a deterministic optimal policy~\citep{Puterman1994}.}, 
$\pi_t:\mathcal{S}\to\mathcal{A}$, that solve $\max_{\{\pi_t\}_{t=0}^{T-1}}\;
    \mathbb{E}\!\left[\sum_{t=0}^{T-1} r_t\big(s_t, \pi_t(s_t)\big)\,\Big|\, 
    s_0 = s\right]$, such that $|\mathcal{T}_{\mathrm{upd}}| \le B$, where $B\in\mathbb{N}$ is the 
available resource budget. The central question is how these $B$ updates should be allocated over the horizon. To answer it, we first develop a skip-update algorithm allowing the agent to operate under an arbitrary update schedule, and introduce a performance metric to evaluate its performance. The remainder of this section develops both ingredients.

\subsection{Skip-update Algorithm}\label{sec:skip-update}
The skip-update scheme alternates between update phases and skip phases. It builds on the maximum-likelihood approach to learning and planning in TVMDPs developed in~\citep{ornik2021learning,puthumanaillam2024weathering}, extending it to a budgeted setting in which the agent consume over state observation and recompute its policy only at selected time steps. Let $\mathcal{T}_{\mathrm{upd}} = \{\tau_0 < \dots < \tau_{N_T}\}$ denote the update times, $\mathcal{T}_{\mathrm{skip}} = \{0,\dots,T-1\}\setminus\mathcal{T}_{\mathrm{upd}}$ the skip times, and $k(t)=\max\{k:\tau_k\le t\}$ the index of the most recent update at time $t$. We describe the update and skip phases below (see Appendix~\ref{app:skip_update} for details):
\vspace{-1.5em}
\begin{itemize}
\item \textbf{Update phase.} At each $\tau_k$, the agent observes
    the transition $(s_{\tau_k-1},a_{\tau_k-1},s_{\tau_k})$, augments the 
    dataset $\mathcal{D}_{\tau_k} = \{(s_{\tau_j-1}, a_{\tau_j-1}, s_{\tau_j}) 
    : 0\le j\le k\}$, and estimates the transition kernels by maximum likelihood 
    subject to the drift bound, resulting in the estimate $\hat{P}_{\tau_k-1}$. 
    Then the agent uses the following policy
    \vspace{-0.5em}
    \begin{equation}\label{eq:policy-opt-finite}
        \pi^{\mathrm{alg}}_{\tau_k} \;=\;
        \operatorname*{arg\,max}_{\{\pi_h\}_{h=0}^{H_k-1}}\;
        \mathbb{E}\!\left[\sum_{h=0}^{H_k-1} r_{\tau_k+h}\big(x_h,\pi_h(x_h)\big)\right],
    \end{equation}
    with horizon $H_k := \min\{\bar H, T-1-\tau_k\}$ and dynamics 
    $x_{h+1}\sim\hat{P}_{\tau_k-1}(\cdot\mid x_h,\pi_h(x_h))$.
\item \textbf{Skip phase.} Between updates $\tau_k$ and $\tau_{k+1}$, the agent 
    reuses $\pi^{\mathrm{alg}}_{\tau_k}$ over the entire skip interval 
    $[\tau_k,\tau_{k+1})$ without re-planning. It also maintains a 
    \emph{propagated maximum-a-posteriori} (propagated-MAP) state estimate by 
    initializing $\hat{s}_{\tau_k} := s_{\tau_k}$ at the last update and 
    propagating $\hat{s}_{t+1} = \max_{s'\in\mathcal{S}} 
    \hat{P}_{\tau_k-1}\!\bigl(s' \mid \hat{s}_t,\, 
    \pi^{\mathrm{alg}}_{\tau_k}(\hat{s}_t)\bigr)$ for 
    $t \in [\tau_k, \tau_{k+1}-1)$.
\end{itemize}

\subsection{Performance Metric: Dynamic Regret}\label{sec:dynamic-regret}

To evaluate the performance of the skip-update algorithm, we compare its cumulative reward to that of an optimal policy with full knowledge of $\{P_t\}_{t\geq 0}$ and continuous state observations. For any $s\in\mathcal S$, we define the optimal cumulative reward and the skip-update cumulative reward as
{\footnotesize
\begin{align}
    J_T^\star(s)
    &=
    \max_{\{\pi_t\}_{t=0}^{T-1}}
    \mathbb{E}\!\left[
    \sum_{t=0}^{T-1}
    r_t\big(s_t,\pi_t(s_t)\big)
    \,\Big|\, s_0=s
    \right]
    \text{ and }
    J_T^{\mathrm{alg}}(s)
    &=
    \mathbb{E}\!\left[
    \sum_{t=0}^{T-1}
    r_t\big(s_t,\pi_t^{\mathrm{alg}}(\hat s_t)\big)
    \,\Big|\, s_0=s
    \right].
\end{align}
}
respectively. We measure performance of the skip-update algorithm via the dynamic regret
\begin{align}
    \mathcal{DR}(T)
    :=
    \max_{s\in\mathcal S}
    \big(
    J_T^\star(s)-J_T^{\mathrm{alg}}(s)
    \big).
\end{align}
The central question now becomes how the choice of update times impacts the dynamic regret, which we address in the next section through a dynamic regret analysis.

\section{Main Results}\label{sec:main_results}
We analyze the regret of the skip-update algorithm in Section~\ref{sec:dynamic-regret-analysis} and use the analysis to design an online, regret-guided update rule in Section~\ref{sec:adaptive_update}.

\subsection{Dynamic Regret Analysis}\label{sec:dynamic-regret-analysis}
A key challenge in the analysis of the dynamic regret is controlling how
trajectory discrepancies under the skip-update algorithm's policy and the optimal policy propagate over time. We control them through a finite-time mixing condition, formalized via the following overlap coefficient.
\begin{definition}[Overlap coefficient]\label{def:overlap}
Fix $m\ge 1$. The overlap coefficient of two policies $\{\pi^1_t\}_{t\geq0}$ and 
$\{\pi^2_t\}_{t\geq0}$ at time $t\ge 0$ is defined as
\[
    \eta_t(\pi^1,\pi^2) \;:=\;
    \min_{s^1,s^2\in\mathcal{S}}\;\sum_{s'\in\mathcal{S}} \min\!\Big\{P^{\pi^1_{t:t+m-1}}_{t:t+m-1}(s'\mid s^1),\; P^{\pi^2_{t:t+m-1}}_{t:t+m-1}(s'\mid s^2)\Big\},
\]
where $P^{\pi_{t:t+m-1}}_{t:t+m-1}(s'\mid s) :=
\sum_{s_1,\ldots,s_{m-1}\in\mathcal{S}}
\prod_{k=0}^{m-1} P_{t+k}(s_{k+1}\mid s_k, \pi_{t+k}(s_k))$,
with $s_0=s$ and $s_m=s'$, is the $m$-step transition kernel induced by the policy $\{\pi_t\}_{t\geq0}$ over $[t,t+m)$.
\end{definition}

\begin{assumption}[Uniform finite-time mixing~\citep{zhang2024predictive}]\label{ass:mixing}
There exist $m\ge 1$ and $\eta\in(0,1]$ such that $\eta_t(\pi^{\mathrm{alg}}, \pi^\star)\ge \eta$ for all $t\in\{0,\dots,T-1\}$, where $\pi^\star=\{\pi^\star_{t}\}_{t\geq 0}$ denotes the optimal policy of $\mathcal{M}$.
\end{assumption}
Assumption~\ref{ass:mixing} is a uniform ergodicity condition: the optimal policy and any other policies' $m$-step distributions share at least $\eta$ mass. Setting $\alpha := 1-\eta < 1$, the assumption ensures that the discrepancy between the value functions of the optimal and any sub-optimal policy contracts by a factor of $\alpha$ over every $m$-step block. Similar uniform ergodicity assumptions appear in~\citep{tsitsiklis1994asynchronous,
li2019online, yu2009online, zhang2024predictive}. Next, we define the diameter of $\mathcal{M}$.

\begin{definition}[Diameter of $\mathcal{M}$~\citep{zhang2024predictive}]\label{def:diameter}
Let $s^*_t$ denote the state achieving the highest expected cumulative reward from time $t$ onward under the optimal policy. The diameter of $\mathcal{M}$ is defined as
\[
    D \;:=\; \max_{s \in \mathcal{S}}\; \min_{\{\pi_h\}_{h \ge 0}}\; \sup_{t}\;
    \mathbb{E}\!\left[\inf\{\tau > 0 : s_{t+\tau} = s^*_{t+\tau}\}\,\Big|\, s_t = s\right].
\]
\end{definition}
The diameter captures the worst-case expected time required to reach the optimal state from any starting state across all time steps. Under Assumption~\ref{ass:mixing}, $D$ is finite~\citep{zhang2024predictive}.

To state the bound, define the update-time kernel mismatch
$\hat\varepsilon_{t,i} := \max_{(s,a)}\|P_{t+i}(\cdot\mid s,a) -
\hat{P}_{t-1}(\cdot\mid s,a)\|_{\mathrm{tv}}$,
where $\|p-q\|_{\mathrm{tv}} := \tfrac{1}{2}\sum_s|p(s)-q(s)|$
denotes the total variation distance,
the skip-interval mismatches
\[
    \bar\varepsilon_{\tau_{k(t)},t}
    :=
    \max_{(s,a)}
    \big\|
    P_t(\cdot\mid s,a)
    -
    P_{\tau_{k(t)}}(\cdot\mid s,a)
    \big\|_{\mathrm{tv}},
    \qquad
    \bar\delta_{\tau_{k(t)},t}
    :=
    \max_s
    \mathrm{sp}
    \big(
    r_t(s,\cdot)-r_{\tau_{k(t)}}(s,\cdot)
    \big),
\]
where $\mathrm{sp}(v) := \max_a v(a) - \min_a v(a)$ denotes the span
seminorm, and the aggregated per-step skip mismatch
$\bar{e}_{\tau_{k(t)},j} := \bar\varepsilon_{\tau_{k(t)},j}\,D +
\bar\delta_{\tau_{k(t)},j}$.

\begin{theorem}[Dynamic regret of the skip-update algorithm]
\label{th:main}
Under Assumption~\ref{ass:mixing}, the skip-update algorithm satisfies
\begin{equation}\label{eq:main_regret}
    \mathcal{DR}(T)
    \;\le\;
    \sum_{t\,\in\,\mathcal{T}_{\mathrm{upd}}}
    \mathcal{R}_{\tau_{k(t)}}
    \;+\;
    \sum_{t\,\in\,\mathcal{T}_{\mathrm{skip}}}
    \Bigl(
    \mathcal{R}_{\tau_{k(t)}}
    +
    \alpha^{\left\lfloor\frac{T-t}{m}\right\rfloor} D
    +
    \bar{e}_{\tau_{k(t)},\,t}
    +
    \mathcal{E}^{\mathrm{drift}}_{\tau_{k(t)},t}
    \Bigr),
\end{equation}
where
\[
    \mathcal{R}_{\tau_{k(t)}}
    :=
    \alpha^{\left\lfloor\frac{H_{\tau_{k(t)}}-1}{m}\right\rfloor} D
    +
    \hat\varepsilon_{\tau_{k(t)},0}\cdot D
    +
    \mathcal{E}^{\mathrm{est}}_{\tau_{k(t)}},
\]
and $\mathcal{E}^{\mathrm{est}}_{\tau_{k(t)}} =
\mathcal{O}\!\left(
\frac{1-\alpha^{\left\lfloor\frac{H_{\tau_{k(t)}}}{m}\right\rfloor}}{1-\alpha}
\max_i\hat\varepsilon_{\tau_{k(t)},i}\cdot mD\right)$,
$\mathcal{E}^{\mathrm{drift}}_{\tau_{k(t)},t} =
\mathcal{O}\!\left(
\frac{1-\alpha^{\left\lfloor\frac{T-t}{m}\right\rfloor}}{1-\alpha}
\max_j\bar{e}_{\tau_{k(t)},j}\cdot m\right)$;
see Appendix~\ref{app:main_proof} for the full expressions.
\end{theorem}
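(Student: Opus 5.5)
We will prove the bound by a per-step performance-difference decomposition, following the predictive-control regret analysis of \citet{zhang2024predictive}. Let $V^\star_t$ denote the optimal value function of $\mathcal{M}$ from time $t$. We write
\[
J_T^\star(s)-J_T^{\mathrm{alg}}(s)=\sum_{t=0}^{T-1}\mathbb{E}\bigl[Q^\star_t(s_t,\pi^\star_t(s_t))-Q^\star_t(s_t,\pi^{\mathrm{alg}}_t(\hat s_t))\bigr],
\]
with the expectation taken along the algorithm's trajectory. This is the standard telescoping identity for finite-horizon MDPs, and it reduces $\mathcal{DR}(T)$ to a sum of per-step advantage gaps. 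We then split the sum according to $t\in\mathcal{T}_{\mathrm{upd}}$ or $t\in\mathcal{T}_{\mathrm{skip}}$. Throughout, Assumption~\ref{ass:mixing} enters through a contraction lemma: under any pair of policies with overlap at least $\eta$, $\mathrm{sp}(V_t-V'_t)$ shrinks by a factor $\alpha$ per $m$-step block. Moreover, the span of any optimal value function is at most $D$, since from any state one can reach $s^\star_t$ in expected time at most $D$.

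\textbf{Update times.} At $t=\tau_k$ we compare three quantities: the true $Q^\star_t$; the $Q$-function of the truncated $H_k$-horizon problem under the true kernels; and the $Q$-function of the truncated problem under $\hat P_{\tau_k-1}$, which $\pi^{\mathrm{alg}}_{\tau_k}$ maximizes exactly.
\begin{itemize}
\item \emph{Truncation.} The difference between the true and truncated problems is a terminal-value mismatch of span at most $D$. Propagated back $H_k-1$ steps through the contraction, it gives $\alpha^{\lfloor (H_k-1)/m\rfloor}D$.
\item \emph{First-step kernel error.} The model error at the first step is bounded by the simulation lemma, $|(P-\hat P)V|\le \|P-\hat P\|_{\mathrm{tv}}\,\mathrm{sp}(V)$, which gives $\hat\varepsilon_{\tau_k,0}D$.
\item \emph{Later-step kernel errors.} The errors at lookahead steps $i\ge1$ each contribute $\hat\varepsilon_{\tau_k,i}D$, discounted by $\alpha^{\lfloor i/m\rfloor}$. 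Summing the geometric series in blocks of length $m$ yields $\mathcal{E}^{\mathrm{est}}$.
\end{itemize}
Since a greedy maximizer of an approximate $Q$ loses at most twice the sup-error, the per-step gap at an update time is $\mathcal{R}_{\tau_k}$, up to constants absorbed in the $\mathcal{O}$.

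\textbf{Skip times.} For $t\in(\tau_k,\tau_{k+1})$ the executed action is $\pi^{\mathrm{alg}}_{\tau_k}$ applied at offset $h=t-\tau_k$ to $\hat s_t$. We insert intermediate comparisons in four steps.
\begin{enumerate}
\item We first pretend the MDP were frozen at $\tau_k$, i.e.\ kernels $P_{\tau_k}$ and rewards $r_{\tau_k}$. Then the stale policy is as good as at the update, up to $\mathcal{R}_{\tau_k}$.
\item We unfreeze the current step. The reward change contributes $\bar\delta_{\tau_k,t}$, because only the action-span of $r_t-r_{\tau_k}$ affects an advantage. The kernel change contributes $\bar\varepsilon_{\tau_k,t}D$. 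Together these give $\bar e_{\tau_k,t}$.
\item Unfreezing the future steps $j>t$ introduces errors $\bar e_{\tau_k,j}$, which are contracted by the mixing assumption. Summing the geometric series over the remaining horizon $T-t$ gives $\mathcal{E}^{\mathrm{drift}}$.
\item Finally, we must account for acting on $\hat s_t$ rather than $s_t$. When $\hat s_t\neq s_t$, the action is effectively an arbitrary policy. By Assumption~\ref{ass:mixing}, which holds uniformly for $\pi^{\mathrm{alg}}$, the resulting value discrepancy is bounded by the terminal-type term $\alpha^{\lfloor (T-t)/m\rfloor}D$.
\end{enumerate}
Summing over skip times then gives the second sum in~\eqref{eq:main_regret}. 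Taking $\max_s$ completes the bound.

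\textbf{Main obstacle.} The hardest step is the state-estimate mismatch at skip times. The propagated-MAP estimate can diverge from $s_t$ with non-negligible probability, and a naive bound would charge a full $\mathrm{sp}(Q^\star_t)\le D$ per step rather than a contracted one. My first attempt is to couple the two trajectories: the algorithm's trajectory, and a fictitious one that uses $\pi^{\mathrm{alg}}_t(s_t)$. By the overlap coefficient, they coalesce within each $m$-block with probability at least $\eta$. The key point to verify is that the resulting discrepancy is exactly of the form $\alpha^{\lfloor (T-t)/m\rfloor}D$ claimed in the statement.

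If this coupling argument does not directly yield that form, I will fall back on the contraction applied to the difference of value functions under the two policies from time $t$ onward. This requires that Assumption~\ref{ass:mixing} covers the mixed policy induced by $\hat s_t$, which I would argue by reading ``any other policies'' in the remark after the assumption literally.
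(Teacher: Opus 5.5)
\textbf{The gap is step 4 of your skip-time analysis.} Your architecture matches the paper's: the performance-difference identity, the split into update and skip times, and contraction via the paper's Lemma~\ref{lem:multi_stage_error} for truncation, estimation and drift. But the skip bound rests on step 4, and step 4 is false. Let $a=\pi^{\mathrm{alg}}_{\tau_{k(t)}}(s_t)$ and $\hat a=\pi^{\mathrm{alg}}_{\tau_{k(t)}}(\hat s_t)$. The state-mismatch loss is then a one-step advantage gap,
\[
Q^\star_t(s_t,a)-Q^\star_t(s_t,\hat a)=r_t(s_t,a)-r_t(s_t,\hat a)+\sum_{s'}\bigl(P_t(s'\mid s_t,a)-P_t(s'\mid s_t,\hat a)\bigr)V^\star_{t+1}(s'),
\]
and mixing does not shrink it. The contraction only damps the influence of a mismatch many steps ahead. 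The reward gap here is immediate, and $\mathrm{sp}(V^\star_{t+1})$ does not decay with $T-t$. Your coupling controls only the continuation, which the identity already charges through $V^\star_{t+1}$; the immediate gap is paid afresh at every skip step.

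\textbf{Counterexample.} Take $\mathcal{S}=\mathcal{A}=\{0,1\}$, $r_t(s,a)=\mathbf{1}\{a=s\}$, next state uniform regardless of $(s,a)$, $T\ge 4$, $\bar H\ge 2$, and a single update at $\tau_0=1$.
\begin{itemize}
\item Every pair of policies has overlap $1$. So Assumption~\ref{ass:mixing} holds with $m=1$ and $\alpha=0$ under any reading of ``any other policies'', and $\alpha^{\lfloor (T-t)/m\rfloor}D=0$ for all $t\le T-1$.
\item The planner's first-step map is $\pi(s)=s$ for any $\hat P$, and $\hat s_t$ is independent of the uniform $s_t$. So the mismatch term equals $\mathbf{1}\{\hat s_t\neq s_t\}$, with expectation $1/2$ at each skip step $t\ge 2$. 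Step 4 fails for every estimator.
\item If moreover $\hat P=P$, every term on the right of \eqref{eq:main_regret} vanishes, while $\mathcal{DR}(T)\ge (T-2)/2$. Your argument uses $\hat P$ only through the $\hat\varepsilon$'s, and the paper's experiments instantiate updates exactly this way. So no argument of this kind can prove the displayed bound, and your fallback cannot rescue it.
\end{itemize}

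\textbf{What the paper does instead.} With $\tau=\tau_{k(t)}$, it splits the skip-time gap into three pieces.
\begin{itemize}
\item (I) $Q^\star_{\tau}(s_t,\pi^\star_{\tau}(s_t))-Q^\star_{\tau}(s_t,\pi^{\mathrm{alg}}_{\tau}(s_t))$. This is bounded pointwise in the state by the update-time argument, for the reused first-step map. Under your offset-$h$ reading, the truncation term becomes $\alpha^{\lfloor (H_k-h-1)/m\rfloor}D$ and the plan runs out for $h\ge H_k$, so your step 1 would not return $\mathcal{R}_{\tau_k}$.
\item (II) $\mathrm{sp}\bigl(Q^\star_t(s_t,\cdot)-Q^\star_{\tau}(s_t,\cdot)\bigr)\le \bar e_{\tau,t}+\mathrm{sp}(V^\star_{t+1}-V^\star_{\tau+1})$. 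Here $V^\star_{t+1}$ is realized as the value of a time-shifted auxiliary MDP whose horizon ends at $\tau+T-t$. Lemma~\ref{lem:multi_stage_error} then yields $\mathcal{E}^{\mathrm{drift}}$ from the per-step mismatches, and $\alpha^{\lfloor (T-t)/m\rfloor}D$ from the terminal mismatch $\mathrm{sp}(V^\star_{\tau+T-t})\le D$. So the residual-horizon term pays for the horizon shift between $Q^\star_{\tau}$ and $Q^\star_t$. Nothing in your steps 1--3 pays for that shift, which is why the term looked free to absorb the state mismatch.
\item (III) The state mismatch, bounded by the non-contracting term $\mathcal{E}^{\mathrm{state}}_t=\max_s\mathrm{sp}(r_t(s,\cdot))+\max_s\max_{a,a'}\|P_t(\cdot\mid s,a)-P_t(\cdot\mid s,a')\|_{\mathrm{tv}}\,D$.
\end{itemize}
The bound the appendix actually derives carries $\mathcal{E}^{\mathrm{state}}_t$ in every skip term. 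The displayed statement drops it, and the example shows it cannot be dropped.

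\textbf{A smaller point.} At update times, use $Q^\star_t(s,a_t^\star)-Q^\star_t(s,a_t^{\mathrm{alg}})\le\mathrm{sp}\bigl(Q^\star_t(s,\cdot)-\hat Z^\star_{t,0}(s,\cdot)\bigr)$ rather than ``twice the sup-error''. The truncated $Q$ differs from $Q^\star_t$ by an offset of order the reward collected over the $T-t-H_t$ untruncated steps, so only the span over actions is controlled. The span version also gives the unit coefficients in $\mathcal{R}_{\tau_k}$, which are not inside the $\mathcal{O}(\cdot)$.
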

The proof follows the decomposition of the regret into update phase and skip 
phase, adapting arguments from~\citep{zhang2024predictive} for the update-time 
contribution and developing new bounds for the skip-interval terms, which is 
the main technical contribution of this work; see Appendix~\ref{app:main_proof} 
for details.
\vspace{-0.5cm}
\paragraph{Interpretation.}
The term $\mathcal{R}_{\tau_{k(t)}}$ combines a finite-horizon truncation 
error $\alpha^{\lfloor(H_{\tau_{k(t)}}-1)/m\rfloor}D$, an immediate estimation 
mismatch $\hat\varepsilon_{\tau_{k(t)},0}\cdot D$, and an accumulated estimation 
error $\mathcal{E}^{\mathrm{est}}_{\tau_{k(t)}}$ from planning with 
$\hat{P}_{\tau_{k(t)}-1}$. The skip sum adds three terms on top of the inherited 
$\mathcal{R}_{\tau_{k(t)}}$: an \emph{immediate drift} $\bar{e}_{\tau_{k(t)},t}$ 
growing with skip length $t-\tau_{k(t)}$, a \emph{residual-horizon} decay 
$\alpha^{\lfloor(T-t)/m\rfloor}D$ shrinking as fewer steps remain, and an 
\emph{accumulated drift} $\mathcal{E}^{\mathrm{drift}}_{\tau_{k(t)},t}$ from 
stale kernels and a propagated state estimate over the skip interval.

\subsection{Adaptive Update Rule Design}
\label{sec:adaptive_update}

\begin{wrapfigure}{r}{0.5\textwidth}
\vspace{-3.4em}
\begin{minipage}{0.5\textwidth}
\captionsetup{type=algorithm}
\hrule
\vspace{0.25em}
\caption{\small Budgeted Adaptive Update Rule}
\label{alg:adaptive_update}
\vspace{-0.6em}
\hrule
\vspace{0.2em}
\footnotesize
\begin{algorithmic}[1]
\REQUIRE Horizon $T$, budget $B$, planning horizon $H$, threshold $\lambda$, 
         weights $w_1,w_2,w_3$
\STATE $\mathcal{T}_{\mathrm{upd}}\leftarrow\{\tau_0\}$ where $\tau_0 = 1$
\STATE Observe $s_0$ and $s_1$, estimate $\widehat{P}_0$ from $(s_0, a_0, s_1)$, 
       compute $\pi^{\mathrm{alg}}_{\tau_0}$, set $\hat{s}_1\leftarrow s_1$, 
       and set $\tau\leftarrow 1$
\FOR{$t=1,\ldots,T-1$}
    \IF{$t>1$}
        \STATE $b_t \leftarrow B-|\mathcal{T}_{\mathrm{upd}}|$
        \IF{$b_t=0$}
            \STATE $z_t\leftarrow\textsc{Skip}$
        \ELSIF{$b_t \ge T-t$}
            \STATE $z_t\leftarrow\textsc{Update}$
        \ELSE
            \STATE Evaluate $S_t$ via~\eqref{eq:score}
            \STATE $L_t \leftarrow \left\lceil (T-t)/b_t \right\rceil$
            \IF{$S_t\ge \lambda$ \textbf{or} $t-\tau \ge L_t$}
                \STATE $z_t\leftarrow\textsc{Update}$
            \ELSE
                \STATE $z_t\leftarrow\textsc{Skip}$
            \ENDIF
        \ENDIF
        \IF{$z_t=\textsc{Update}$}
            \STATE $\mathcal{T}_{\mathrm{upd}}\leftarrow \mathcal{T}_{\mathrm{upd}}\cup\{t\}$
            \STATE Observe $s_t$, estimate $\widehat{P}_{t-1}$ from 
                   $(s_{t-1}, a_{t-1}, s_t)$, compute $\pi^{\mathrm{alg}}_{t}$,
                   set $\hat{s}_t\leftarrow s_t$, and set $\tau\leftarrow t$
        \ENDIF
    \ENDIF
    \STATE Select $a_t=\pi^{\mathrm{alg}}_{\tau}(\hat{s}_t)$ and execute $a_t$
    \STATE Propagate $\hat{s}_{t+1}$ by the MAP transition under 
           $\widehat{P}_{\tau-1}$
\ENDFOR
\end{algorithmic}
\vspace{0.2em}
\hrule
\end{minipage}
\vspace{-2.5em}
\end{wrapfigure}

The goal is to design an adaptive rule that lets the agent decide \emph{when} 
to spend its limited budget so that updates are placed where they matter most. 
Theorem~\ref{th:main} reveals precisely which quantities grow as the agent 
continues through a skip interval: the three skip-specific terms---the 
residual-horizon decay $\alpha^{\lfloor(T-t)/m\rfloor}D$, the immediate drift 
$\bar{e}_{\tau,t}$, and the accumulated drift 
$\mathcal{E}^{\mathrm{drift}}_{\tau,t}$---directly quantify the marginal regret 
cost of not updating at time $t$. We consider the following score:
\begin{equation}\label{eq:score}
\begin{aligned}
    S_t \;:=\;& w_1\,\phi\!\left(\alpha^{\lfloor(T-t)/m\rfloor}D\right)\\
             &+\; w_2\,\phi\!\left(\bar{e}_{\tau,t}\right)
             \;+\; w_3\,\phi\!\left(\mathcal{E}^{\mathrm{drift}}_{\tau,t}\right),
\end{aligned}
\end{equation}
where $w_1,w_2,w_3\ge0$ are hyperparameters, $\phi(x):=\log(1+x)$ normalizes 
scale, and other choices of $\phi$ and weights are possible. Given the score 
$S_t$ and remaining budget $b_t = B - |\mathcal{T}_{\mathrm{upd}}|$, the 
procedure is summarized in Algorithm~\ref{alg:adaptive_update}: an update is 
triggered when $S_t\ge\lambda$ (threshold $\lambda$ is a hyperparameter) or 
$t-\tau\ge L_t=\lceil(T-t)/b_t\rceil$ (a pacing constraint; other pacing 
functions are also possible). Hyperparameters $w_1,w_2,w_3$ and $\lambda$ are 
discussed in Appendix~\ref{app:expt}, which also provides explicit expressions 
for $\bar{e}_{\tau,t}$ and $\mathcal{E}^{\mathrm{drift}}_{\tau,t}$ in terms of 
the known drift bounds $\{\varepsilon_t\}_{t\geq 0}$.

\vspace{-0.5em}
\section{Experiments}
\label{sec:experiments}
\vspace{-0.5em}
We evaluate the performance of the adaptive allocation (Algorithm~\ref{alg:adaptive_update}) against several baselines. The experiments are designed to isolate the effect of update timing: all methods use the same update budget and skip-interval execution semantics, differing only in when they choose to update.

\vspace{-0.75em}
\paragraph{Baselines.}
We compare against several exact-budget update schedules, grouped into two categories.
The first five are \emph{oblivious} to the drift signal and use the same finite horizon $T$: \textsc{Periodic} places the $B$ updates at uniform spacings over the horizon; \textsc{Best-offset Periodic} uses the same periodic spacing but chooses the phase/offset with the best validation performance; \textsc{Random} samples update times uniformly subject to the same budget; \textsc{Front-loaded} and \textsc{Back-loaded} spend the budget near the beginning and end of the horizon, respectively. The remaining three are \emph{drift-aware} but do not use the regret bound structure: \textsc{Drift-threshold} triggers updates when accumulated transition change exceeds a threshold; \textsc{Self-triggered Drift Forecast} chooses the next update time by forecasting when the accumulated drift since the current update will exceed a threshold; \textsc{Lazy Value-weighted Drift} updates only when the accumulated model drift is large on state--action pairs that are relevant under the currently reused policy. Further implementation details and hyperparameters are provided in Appendix~\ref{app:expt}.

\vspace{-0.75em}
\paragraph{Metrics and Protocol.}
The primary metric is empirical dynamic regret against a no-budget oracle with full knowledge of $\{P_t\}_{t\geq 0}$. All comparisons enforce exact-budget usage, including the initial update at time zero, ensuring performance differences reflect update timing alone. We also report update-placement diagnostics: precision, the fraction of non-initial updates placed inside high-drift intervals; coverage, the fraction of high-drift intervals that receive at least one update; and mean/max stale drift since the most recent update. Scores and baseline thresholds are selected on validation seeds and evaluated on disjoint held-out seeds; additional details are given in Appendix~\ref{app:expt}.

\vspace{-0.25em}

\begin{table*}[htp]
\centering
\scriptsize
\caption{\small Simulation results with propagated-MAP execution. Entries are empirical mean dynamic regret over 1000 held-out seeds; $n/T/H/B$ denotes grid size, horizon, planning horizon, and update budget. The adaptive-score column indicates which terms in the update score $S_t$ in~\eqref{eq:score} are active: \textit{Gap} sets $w_2=0$, \textit{Drift} sets $w_3=0$, and \textit{Gap+Drift} sets $w_2,w_3\neq 0$.}
\label{tab:sim_all_baselines}
\vspace{0.35em}
\resizebox{\linewidth}{!}{%
\begin{tabular}{@{}lcl>{\columncolor{AdaptiveBlue}}ccccccccc@{}}
\toprule
Setting & $n/T/H/B$ & Adaptive score
& Adapt & Per. & Per.-BO & Rand. & Front & Back & Drift & Self & Lazy \\
\midrule
Piecewise-Late Burst
& 14/80/8/2
& Gap
& \textbf{1.91} & 4.93 & 2.78 & 4.45 & 5.48 & 4.93 & 3.87 & 3.87 & 3.45 \\
Piecewise-Two-Burst
& 12/50/7/4
& Gap
& \textbf{0.60} & 1.09 & 0.86 & 1.06 & 2.56 & 2.32 & 0.94 & 1.01 & 0.83 \\
Piecewise-Alternating
& 12/50/7/4
& Drift
& \textbf{0.58} & 0.87 & 0.79 & 1.01 & 2.51 & 2.26 & 0.89 & 0.81 & 0.90 \\
Sinusoidal-Medium
& 12/50/7/3
& Gap
& \textbf{0.23} & 0.57 & 0.61 & 0.65 & 1.92 & 2.13 & 0.41 & 0.41 & 1.28 \\
Piecewise-Narrow-Late
& 14/70/8/2
& Gap
& \textbf{0.83} & 2.24 & 0.92 & 2.04 & 2.59 & 2.24 & 0.92 & 0.92 & 1.10 \\
Piecewise-Sparse-Late
& 12/60/7/4
& Gap+Drift
& \textbf{0.51} & 0.78 & 0.78 & 0.79 & 2.08 & 1.82 & 0.58 & 0.65 & 0.74 \\
Piecewise-Sparse-Late
& 12/50/7/5
& Gap+Drift
& \textbf{0.34} & 0.40 & 0.44 & 0.53 & 1.68 & 1.41 & 0.42 & 0.54 & 0.58 \\
Piecewise-Three-Short
& 12/60/7/6
& Gap+Drift
& \textbf{0.56} & 0.61 & 0.89 & 1.19 & 3.50 & 3.30 & 0.68 & 0.74 & 1.21 \\
\bottomrule
\end{tabular}
}
\vspace{-0.5em}
\begin{flushleft}
\scriptsize
\textit{Abbreviations:} Adapt = proposed adaptive scheduler; Per. = uniform periodic; Per.-BO = best-offset periodic; Rand. = random exact-budget schedule; Drift = drift-threshold paced; Self = self-triggered drift forecast; Lazy = lazy value-weighted drift.
\end{flushleft}
\vspace{-1.5em}
\end{table*}

\subsection{Simulation Experiments}
\label{sec:simulation_setup}
\vspace{-0.25em}
We evaluate the Algorithm~\ref{alg:adaptive_update} on the Mars-rover navigation scenario from Example~\ref{ex:mars_rover}, used here as a controlled testbed for studying update allocation under time-varying dynamics. The agent starts near one corner of an $n\times n$ grid and must navigate to a fixed goal in the opposite corner under stochastic, time-varying motion uncertainty, with rewards penalizing distance from the goal. We consider three drift categories: \emph{linear drift}, which produces gradual monotone change; \emph{sinusoidal drift}, which produces recurring nonuniform change; and \emph{piecewise drift}, following bounded-drift constructions used in TVMDP learning benchmarks~\citep{ornik2021learning}, which produces low-change intervals separated by high-change bursts. The setting names use a category--pattern convention: \textsc{Sinusoidal-Medium} denotes a sinusoidal drift profile with a medium period, while \textsc{Piecewise-Late}, \textsc{Piecewise-Two-Burst}, \textsc{Piecewise-Alternating}, and related names describe the timing and number of high-change intervals. Full environment parameters are listed in Appendix~\ref{app:expt}. All budgeted methods use the propagated-MAP skip semantics from Section~\ref{sec:skip-update}, with planning horizon $H$ at update times and the computed first-step policy reused until the next update. We evaluate the performance of Algorithm~\ref{alg:adaptive_update} and compare it with that of the baselines over various values of $n$, $T$, $H$, and $B$.

\vspace{-0.5em}
\paragraph{Simulation Results and Discussions.}
Table~\ref{tab:sim_all_baselines} reports empirical dynamic regret across settings where updates are scarce relative to the horizon, so that each update affects an entire subsequent skip interval rather than only the current decision. Adaptive scheduling improves most clearly when transition changes are temporally localized: uniform schedules can spend updates during low-drift intervals, while purely drift-triggered schedules can update when the model changes even if replanning does not substantially alter the task-relevant decision. The reported results for the proposed scheduler reflect the case $w_2, w_3 \neq 0$ in $S_t$. Algorithm~\ref{alg:adaptive_update} triggers updates based on predicted cost of the current model-policy pair, not on elapsed time or transition signal magnitude.

\begin{wraptable}{r}{0.45\textwidth}
\vspace{-1.5em}
\centering
\scriptsize
\caption{\small Update-placement diagnostics aggregated over the reported simulation settings. Precision and coverage are with respect to high-drift intervals; stale drift is accumulated transition change since the most recent update.}
\label{tab:update_diagnostics}
\vspace{0.35em}
\begin{tabular}{@{}lcccc@{}}
\toprule
Method & Prec. $\uparrow$ & Cov. $\uparrow$ & Mean stale $\downarrow$ & Max stale $\downarrow$ \\
\midrule
\rowcolor{AdaptiveBlue}
Adapt   & 0.455 & 0.417 & 0.069 & 0.198 \\
Per.    & 0.364 & 0.296 & 0.095 & 0.240 \\
Per.-BO & 0.364 & 0.294 & 0.074 & 0.208 \\
Rand.   & 0.484 & 0.328 & 0.091 & 0.250 \\
Front   & 0.045 & 0.013 & 0.177 & 0.400 \\
Back    & 0.182 & 0.054 & 0.168 & 0.401 \\
Drift   & 0.636 & 0.533 & 0.067 & 0.190 \\
Self    & 0.636 & 0.471 & 0.070 & 0.187 \\
Lazy    & 0.636 & 0.494 & 0.096 & 0.252 \\
\bottomrule
\end{tabular}
\vspace{-1em}
\end{wraptable}
Table~\ref{tab:update_diagnostics} explains the update-allocation behavior behind the regret results. The adaptive scheduler does not place the largest fraction of its updates inside high-drift intervals: its precision is lower than the drift-aware schedules, yet its stale-drift values are comparable and its regret is lower. Together, these diagnostics show that the adaptive rule is not simply a peak detector for the drift signal. It keeps accumulated model staleness low while filtering updates through the estimated value of replanning. High transition change is a useful signal but not the objective; an update is valuable only when it improves the decision induced by the current state estimate and reused policy. 

\begin{wrapfigure}{r}{0.6\textwidth}
\vspace{-1.0em}
\centering
\includegraphics[width=\linewidth]{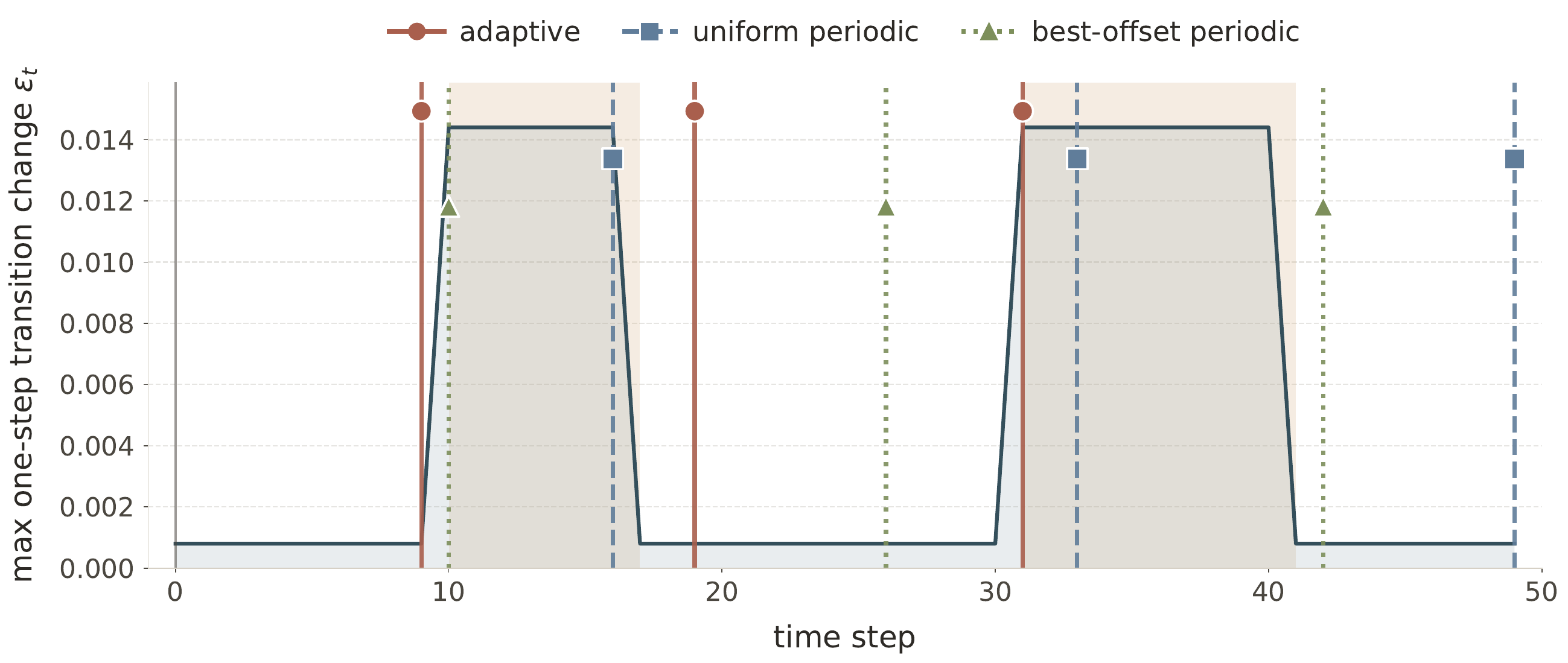}
\caption{\small Update timing in a representative piecewise-drift setting. 
The curve shows maximum one-step transition change, shaded regions denote high-drift 
intervals, and vertical markers show update times.}
\label{fig:update_timing}
\vspace{-1.0em}
\end{wrapfigure}
Figure~\ref{fig:update_timing} visualizes this for a representative piecewise-drift setting: the adaptive schedule concentrates updates around high-change portions of the horizon without simply tracking every drift peak, reflecting the joint influence of the current change signal, the remaining horizon, and the re-planning value. Figure~\ref{fig:sim_trajectories} shows the corresponding effect on realized navigation trajectories. In all simulation experiments, the drift bounds $\{\varepsilon_t\}_{t\geq 0}$ are set equal to the actual kernel drift, reflecting a setting where the temporal profile of the dynamics is known to the agent. 

\begin{figure}[!h]
\vspace{-0.5em}
    \centering
    \includegraphics[width=0.9\linewidth]{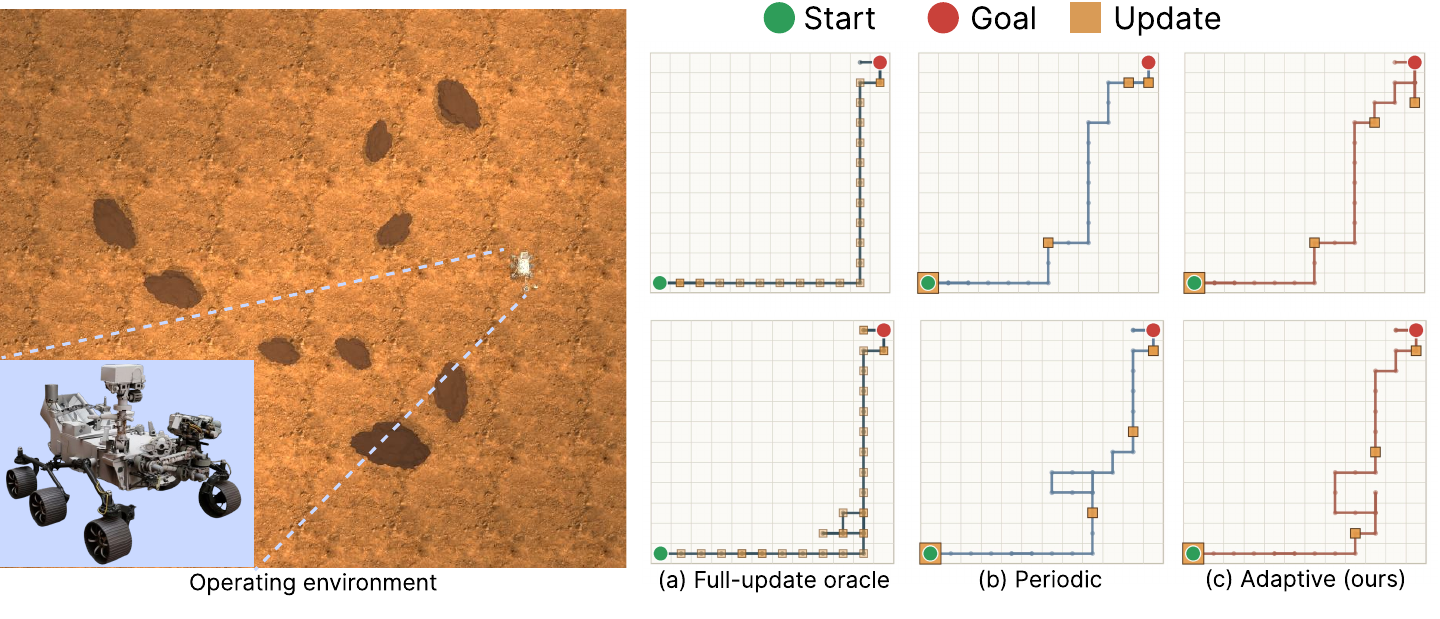}
    \caption{\small 
      Representative gridworld trajectories: (top) piecewise sparse-late drift setting with $n/T/H/B=12/50/7/5$; (bottom) three-short-burst drift setting with $n/T/H/B=12/60/7/6$. Adaptive allocation places updates near trajectory segments where stale policies would otherwise cause larger deviations, leading to a more direct path toward the goal.
      }
    \label{fig:sim_trajectories}
\vspace{-1.5em}
\end{figure}

\subsection{Hardware Experiments}
\label{sec:hardware_results}
\paragraph{Hardware Setup.}
We evaluate on a Crazyflie quadrotor in an indoor navigation task with three obstacle-density settings: no obstacles, sparse obstacles, and dense obstacles, each averaged over 15 independently generated layouts with the same start--goal structure. The high-level scheduler and planner run offboard, while the Crazyflie executes commanded motion using its low-level stabilization controller, matching the propagated-MAP semantics from Section~\ref{sec:skip-update}. The adaptive method uses the \textsc{gap+drift-log} update score. We follow the experimental setting of \cite{puthumanaillam2024weathering} and induce stochasticity by applying simulated wind disturbances that perturb the controller during execution.
\begin{table}[!h]
\vspace{-1.5em}
\centering
\scriptsize
\begingroup
\ElegantTableSetup
\setlength{\tabcolsep}{5.2pt}
\caption{\small Crazyflie hardware results across obstacle densities. Each setting averages over 15 layouts; all methods use $n/T/H/B=12/60/7/6$. Success is the fraction of layouts reaching the goal, collision is the avg.\ number of collision events per layout, and cost is normalized trajectory cost with collision penalties.}
\label{tab:crazyflie_results}
\vspace{0.35em}
\begin{adjustbox}{max width=\textwidth,center}
\begin{tabular}{@{}lccccccccc@{}}
\toprule
&
\multicolumn{3}{c}{No obstacles}
& \multicolumn{3}{c}{Sparse obstacles}
& \multicolumn{3}{c}{Dense obstacles} \\
\cmidrule(lr){2-4}
\cmidrule(lr){5-7}
\cmidrule(l){8-10}
Method
& Success $\uparrow$ & Collision $\downarrow$ & Cost $\downarrow$
& Success $\uparrow$ & Collision $\downarrow$ & Cost $\downarrow$
& Success $\uparrow$ & Collision $\downarrow$ & Cost $\downarrow$ \\
\midrule
\rowcolor{AdaptiveBlue}
Adapt
& 0.93 & 0.00 & 1.07
& \textbf{0.87} & \textbf{0.20} & \textbf{1.42}
& \textbf{0.73} & \textbf{0.47} & \textbf{2.05} \\

Per.
& 0.93 & 0.00 & 1.08
& 0.73 & 0.40 & 1.72
& 0.53 & 0.87 & 2.70 \\

Per.-BO
& \textbf{1.00} & 0.00 & \textbf{1.05}
& 0.80 & 0.33 & 1.58
& 0.60 & 0.73 & 2.42 \\

Rand.
& 0.87 & 0.00 & 1.13
& 0.67 & 0.53 & 1.84
& 0.47 & 1.00 & 2.85 \\

Front
& 0.87 & 0.00 & 1.18
& 0.60 & 0.67 & 2.05
& 0.40 & 1.27 & 3.22 \\

Back
& 0.87 & 0.00 & 1.17
& 0.60 & 0.73 & 2.12
& 0.40 & 1.33 & 3.35 \\

Drift
& 0.93 & 0.00 & 1.09
& 0.80 & 0.27 & 1.53
& 0.67 & 0.60 & 2.20 \\

Self
& 0.93 & 0.00 & 1.10
& 0.80 & 0.33 & 1.56
& 0.67 & 0.60 & 2.25 \\

Lazy
& 0.93 & 0.00 & 1.11
& 0.73 & 0.40 & 1.69
& 0.60 & 0.73 & 2.38 \\
\bottomrule
\end{tabular}
\end{adjustbox}
\vspace{-0.5em}
\begin{flushleft}
\scriptsize
\textit{Abbreviations:} Adapt = proposed adaptive scheduler; Per. = uniform periodic; Per.-BO = best-offset periodic; Rand. = random exact-budget schedule; Drift = drift-threshold paced; Self = self-triggered drift forecast; Lazy = lazy value-weighted drift.
\end{flushleft}
\endgroup
\vspace{-2.5em}
\end{table}
\vspace{-1.0em}
\paragraph{Hardware Results and Discussions.}
Table~\ref{tab:crazyflie_results} evaluates the same budgeted-update mechanism on a physical quadrotor. The no-obstacle setting is largely insensitive to update timing: most schedules complete the task reliably because the nominal path is simple and re-planning is rarely safety-critical. The sparse and dense settings are more informative. As obstacle density increases, stale plans are more likely to produce inefficient motion or unsafe corrections, and the benefit of allocating updates at task-relevant times becomes more visible. The adaptive scheduler gives the strongest overall tradeoff in the obstacle settings: it maintains higher success while reducing collision counts and trajectory cost under the same update budget.
\begin{wrapfigure}{r}{0.6\textwidth}
\vspace{-0.25em}
\centering
\includegraphics[width=\linewidth]{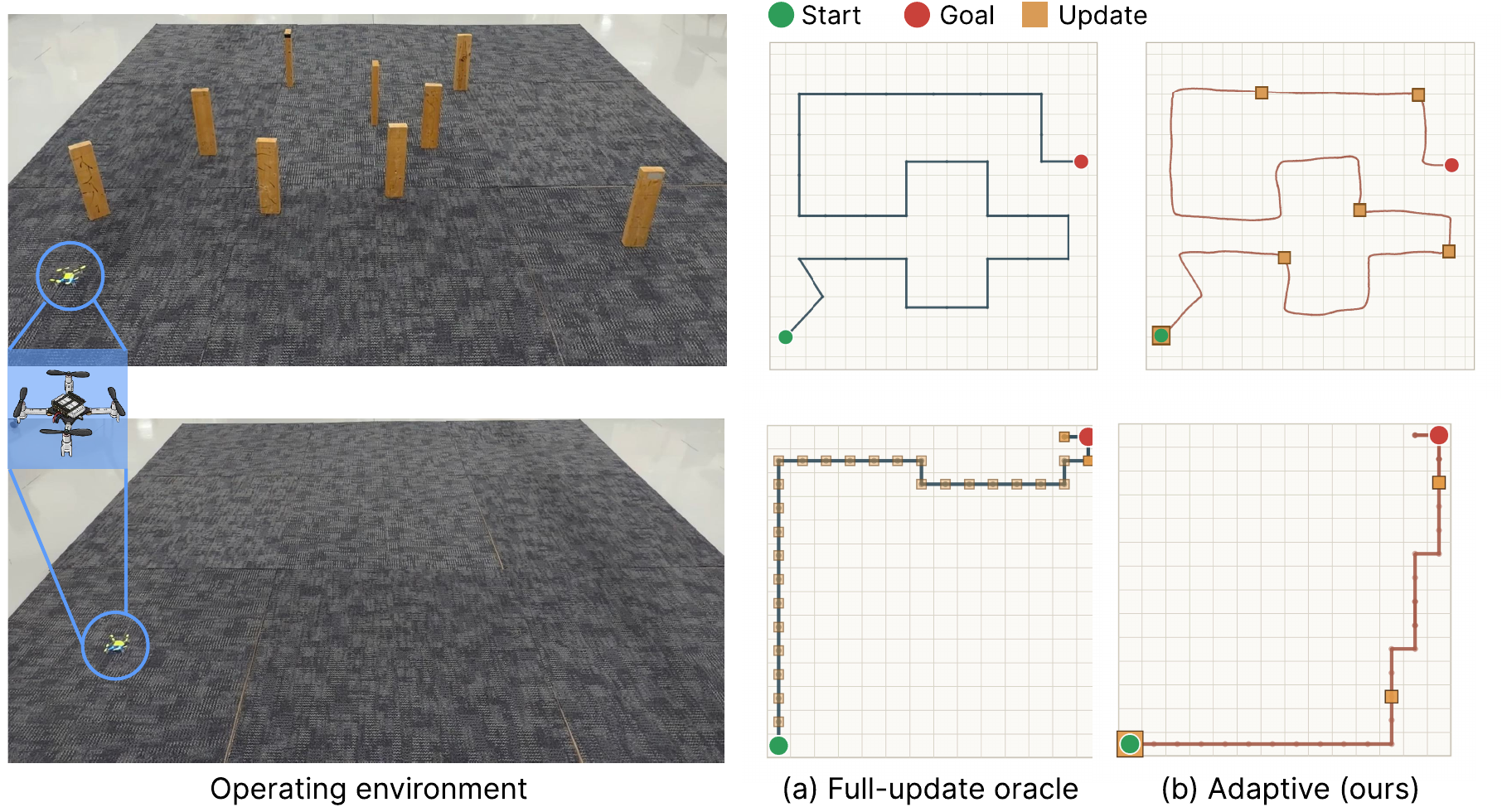}
\caption{\small Representative trajectories with $n/T/H/B=12/60/7/6$: (top) dense-obstacle navigation setting; (bottom) sparse-obstacle indoor navigation setting.}
\label{fig:crazyflie_trajectories}
\vspace{-2.0em}
\end{wrapfigure}
Drift-aware schedules remain competitive, which is expected because obstacle-rich environments create localized changes where model staleness matters. However, the adaptive rule improves on these schedules by coupling model staleness with the expected value of replanning, rather than triggering solely from transition change. Figure~\ref{fig:crazyflie_trajectories} shows representative trajectories for the three obstacle-density settings and marks the update locations along the executed paths.

\section{Concluding Remarks}\label{sec:conclusion}
\vspace{-0.75em}
We proposed an update-scheduling rule for TVMDPs under a budget on state observation and re-planning, derived directly from a dynamic regret analysis of a maximum-likelihood-based skip-update algorithm. The rule distills the skip-interval contribution of the regret bound into a lightweight online score, computable from quantities the agent already maintains, and allocates the available budget adaptively without recomputing value functions. Simulation experiments on Mars-rover navigation and hardware experiments on a Crazyflie quadrotor demonstrate that the proposed adaptive rule consistently outperforms budgeted baselines across all tested settings.
\vspace{-0.77em}
\paragraph{Limitations.}
The current framework operates under several assumptions that define its scope. 
The drift bounds $\{\varepsilon_t\}_{t\geq 0}$ are assumed known and 
state-independent; relaxing either --- by estimating $\varepsilon_t$ online 
or allowing spatially heterogeneous drift --- would broaden applicability. 
Relatedly, the regret-guided adaptive rule relies on $\{\varepsilon_t\}_{t\geq 0}$ 
to compute the score $S_t$; if these bounds are overly conservative or fail to 
reflect the true temporal profile of the drift, the score may trigger updates 
at suboptimal times and the adaptive rule may not outperform simpler schedules 
such as periodic allocation, making online estimation of $\varepsilon_t$ an 
important direction for future work. The regret bound, derived via contraction 
on the span seminorm of value-function differences, captures worst-case 
deviation between the optimal and skip-update policies; when the propagated 
state remains close to the true state, this bound can be conservative. 
Similarly, the adaptive score targets the worst-case regret contribution of 
each skip interval without incorporating rewards accumulated along the traverse, 
leaving room for reward-aware scheduling. Finally, replacing the propagated-MAP 
point estimate with a belief-state representation would extend the framework to 
partially observable settings; however, this direction is substantially more 
challenging, as it would require redefining the regret decomposition over the 
belief simplex, extending the mixing assumption to belief-space dynamics, and 
managing the additional computational cost of belief updates within the budgeted 
setting --- each of which represents a non-trivial departure from the current 
framework.
\bibliography{references}






\begin{appendix}

\section*{\centering Appendix}
\section*{Table of Contents}
\begin{itemize}
    \item Appendix~\ref{app:skip_update} provides additional details on the skip-update algorithm.
    \item Appendix~\ref{app:main_proof} contains the proof of Theorem~\ref{th:main}.
    \item Appendix~\ref{app:expt} provides additional details on the implementation of the adaptive rule (Algorithm~\ref{alg:adaptive_update}) and the experimental setup.
\end{itemize}

\section{Additional Details for the Skip-update Algorithm}\label{app:skip_update}
In this appendix, we provide the constrained maximum likelihood estimation procedure from the update phase of the skip-update algorithm from Section~\ref{sec:skip-update}. The presentation adapts the formulation in~\citep{ornik2021learning} to the budgeted update setting considered here.

At time $t \geq 1$, the agent has access to the dataset
\[
    \mathcal{D}_t
    =
    \{(s_{\tau_j-1}, a_{\tau_j-1}, s_{\tau_j}) : 0 \le j \le k(t)\},
\]
collected at update times, where we set $\tau_0 = 1$ so that $t$ starts from $1$ and the first tuple $(s_0, a_0, s_1)$ is well-defined. Under a sequence of transition kernels $\{\hat{P}_{\tau_k-1}\}_{k=0}^{k(t)}$, the log-likelihood of the observed data $\mathcal{D}_{t}$ is
\[
    \ell\big(\{\hat{P}_{\tau_k-1}\}_{k=0}^{k(t)}; \mathcal{D}_t\big)
    =
    \sum_{k=0}^{k(t)}
    \log \hat{P}_{\tau_k-1}(s_{\tau_k} \mid s_{\tau_k-1}, a_{\tau_k-1}).
\]
The maximum likelihood estimates of the transition kernels $\{\hat{P}_{\tau_k-1}\}_{k=0}^{k(t)}$ are then obtained by solving the following constrained optimization problem
\begin{equation}\label{eq:ccmle}
\begin{aligned}
    \max_{\{\hat{P}_{\tau_k-1}\}_{k=0}^{k(t)}} \quad
    & \sum_{k=0}^{k(t)}
    \log \hat{P}_{\tau_k-1}(s_{\tau_k} \mid s_{\tau_k-1}, a_{\tau_k-1}) \\
    \text{s.t.}\quad
    & \sum_{s'\in\mathcal{S}} \hat{P}_{\tau_k-1}(s' \mid s, a) = 1,
    && \forall s,a,\; 0 \le k \le k(t), \\
    & \hat{P}_{\tau_k-1}(s' \mid s, a) \ge 0,
    && \forall s,s',a,\; 0 \le k \le k(t), \\
    & \big|\hat{P}_{\tau_{k+1}-1}(s' \mid s, a)
          - \hat{P}_{\tau_{k}-1}(s' \mid s, a)\big|
    \le \sum_{\ell=\tau_{k}-1}^{\tau_{k+1}-2} \varepsilon_{\ell},
    && \forall s,s',a,\; 0 \le k < k(t).
\end{aligned}
\end{equation}
The constraints ensure that each $\hat{P}_{\tau_k-1}$ is a valid transition kernel and that successive kernels satisfy the bounded drift condition.

\section{Proof of Theorem~\ref{th:main} and Supporting Results}
\label{app:main_proof}
This appendix provides the proof of Theorem~\ref{th:main}.
We begin with notation and value function definitions
(Section~\ref{app:notation}), followed by a helper lemma
on multi-stage error propagation (Section~\ref{app:helper}),
the main proof (Section~\ref{app_1}), and the supporting
lemmas for update times (Section~\ref{app_2}) and skip
intervals (Section~\ref{app_3}).

\subsection*{Notation and Definitions}
\label{app:notation}

We briefly collect the notation used throughout this appendix.
We denote by $\mathbb{N}$ the set of nonnegative integers and by $\mathbb{R}$ the set of real numbers.
For a finite set $\mathcal{X}$, let $\Delta(\mathcal{X})$ denote the probability simplex over $\mathcal{X}$.
A controlled transition kernel is a mapping $P : \mathcal{S} \times \mathcal{A} \to \Delta(\mathcal{S})$,
where $P(\cdot \mid s,a)$ denotes the distribution of the next state given state $s$ and action $a$;
we write $P_t(\cdot \mid s,a)$ for the kernel at time $t$.
For a controlled transition kernel $P$ and $f : \mathcal{S} \to \mathbb{R}$, define
$(Pf)(s,a) := \mathbb{E}_{s' \sim P(\cdot \mid s,a)}[f(s')].$
For $f : \mathcal{S} \to \mathbb{R}$, the span seminorm is
$\mathrm{sp}(f) := \max_{s \in \mathcal{S}} f(s) - \min_{s \in \mathcal{S}} f(s),$
and for $\mu, \nu \in \Delta(\mathcal{S})$, the total variation distance is
$\|\mu - \nu\|_{\mathrm{tv}} := \tfrac{1}{2} \sum_{s \in \mathcal{S}} |\mu(s) - \nu(s)|.$
A time-varying policy sequence is a collection $\{\pi_t\}_{t=0}^{T-1}$ with $\pi_t : \mathcal{S} \to \mathcal{A}$. We will repeatedly use the following standard bound: for any two controlled 
transition kernels $P, \hat{P} : \mathcal{S} \times \mathcal{A} \to \Delta(\mathcal{S})$ 
and any function $f : \mathcal{S} \to \mathbb{R}$,
\begin{equation}\label{eq:tv_span_bound}
    \mathrm{sp} \Big( \mathbb{E}_{s'\sim P(\cdot\mid s,\cdot)}[f(s')] 
    - \mathbb{E}_{s'\sim \hat{P}(\cdot\mid s,\cdot)}[f(s')] \Big) 
    \leq \max_{a\in\mathcal{A}}\|P(\cdot\mid s,a) - \hat{P}(\cdot\mid s,a)\|_{\mathrm{tv}}\,\mathrm{sp}(f).
\end{equation}

\paragraph{Value Functions.}
The optimal value function of the TVMDP $\mathcal{M}$ at time $t$ is
\[
    V_t^{\star}(s)
    :=
    \max_{\{\pi_k\}_{k=t}^{T-1}}
    \mathbb{E}\!\left[
        \sum_{k=t}^{T-1} r_k\big(s_k,\pi_k(s_k)\big)
        \,\mid\, s_t = s
    \right], \qquad \forall s \in \mathcal{S},
\]
with $V_T^\star \equiv 0$, satisfying the Bellman recursion
\[
    V_t^{\star}(s)
    =
    \max_{a \in \mathcal{A}}
    \left\{
        r_t(s,a)
        +
        \mathbb{E}_{s' \sim P_t(\cdot \mid s,a)}
        \big[ V_{t+1}^{\star}(s') \big]
    \right\}, \qquad t = 0,\dots,T-1.
\]
The corresponding optimal $Q$-value function at time $t$ is
\[
    Q_t^\star(s,a) = r_t(s,a)+\mathbb{E}_{s' \sim P_t(\cdot \mid s,a)}[V_{t+1}^{\star}(s')],
    \qquad \forall (s,a) \in \mathcal{S} \times \mathcal{A},
\]
so that $V_t^\star(s) = \max_{a \in \mathcal{A}} Q_t^\star(s,a)$.

The value function associated with the skip-update algorithm is
\[
    V_t^{\mathrm{alg}}(s)
    :=
    \mathbb{E}\!\left[
        \sum_{\ell=t}^{T-1} r_{\ell}\big(s_{\ell},\pi_{\ell}^{\mathrm{alg}}(\hat{s}_{\tau_{k(\ell)}})\big)
        \,\mid\, s_t = s
    \right], \qquad t=0,\dots,T-1,
\]
where $\hat{s}_{\tau_{k(\ell)}}$ is the propagated-MAP state estimate at the most 
recent update time.

At each update time $\tau_k$, the policy $\pi^{\mathrm{alg}}_{\tau_k}$ is obtained 
by solving a finite-horizon auxiliary MDP with estimated kernel 
$\hat{P}_{\tau_k-1}$, rewards $\{r_{\tau_k+h}\}_{h=0}^{H_k-1}$, and horizon 
$H_k$, as in~\eqref{eq:policy-opt-finite}. We denote the optimal value function 
of this auxiliary MDP by $\hat{W}^{\star}_{t,k}$, defined for $t \in [\tau_k, 
\tau_k + H_k]$ as
\[
    \hat{W}^{\star}_{t,k}(s)
    :=
    \max_{\{\pi_h\}_{h=0}^{\tau_k+H_k-t-1}}
    \mathbb{E}\!\left[
        \sum_{h=0}^{\tau_k+H_k-t-1} r_{\tau_k+h}\big(x_h,\pi_h(x_h)\big)
        \,\mid\, x_0 = s
    \right],
\]
with $x_{h+1} \sim \hat{P}_{\tau_k-1}(\cdot \mid x_h, \pi_h(x_h))$ and 
$\hat{W}^{\star}_{\tau_k+H_k,k} \equiv 0$.

\subsection{Helper Lemma: Multi-stage Error Propagation}
\label{app:helper}
We state a helper lemma that quantifies how the difference between the optimal value functions of two TVMDPs propagates over multiple stages, under the mixing assumption on one of them. This result is taken directly from~\citep{zhang2024predictive}, and will be used to control the error between value functions associated with different models in Sections~\ref{app_1} and~\ref{app_2}.

\medskip
\noindent
Let $(P_t, r_t)$ and $(\tilde{P}_t, \tilde{r}_t)$ denote the transition kernels and reward functions of two TVMDPs defined on the same state-action spaces $\mathcal{S} \times \mathcal{A}$. Let the one-stage optimal Bellman 
operators $\mathcal{B}_t$ and $\tilde{\mathcal{B}}_t$ be defined as
\begin{align*}
    (\mathcal{B}_t f)(s)
    &:=
    \max_{a \in \mathcal{A}}
    \Big\{
        r_t(s,a)
        +
        \mathbb{E}_{s' \sim P_t(\cdot \mid s,a)}[f(s')]
    \Big\},\\
    (\tilde{\mathcal{B}}_t f)(s)
    &:=
    \max_{a \in \mathcal{A}}
    \Big\{
        \tilde{r}_t(s,a)
        +
        \mathbb{E}_{s' \sim \tilde{P}_t(\cdot \mid s,a)}[f(s')]
    \Big\},
\end{align*}
for all $s \in \mathcal{S}$ and any $f:\mathcal{S} \rightarrow \mathbb{R}$. 
The corresponding optimal value functions $\{V_t^\star\}_{t=0}^{T}$ and 
$\{\tilde{V}_t^\star\}_{t=0}^{T}$ satisfy
\[
    V_t^\star = \mathcal{B}_t V_{t+1}^\star,
    \qquad
    \tilde{V}_t^\star = \tilde{\mathcal{B}}_t \tilde{V}_{t+1}^\star,
    \qquad t = 0,\dots,T-1,
\]
with $V_T^\star \equiv 0$ and $\tilde{V}_T^\star \equiv 0$.

\begin{lemma}[Multi-stage error propagation under mixing {\citep{zhang2024predictive}}]
\label{lem:multi_stage_error}
Suppose the TVMDP $(P_t, r_t)$ satisfies Assumption~\ref{ass:mixing} with 
constants $m$ and $\eta$, and define $\alpha = 1 - \eta$. Then, for any 
$t \geq 0$ and any integer $N \geq 1$ with $N \leq T - t$, the following holds:
\begin{align*}
    \mathrm{sp} \big( V^{\star}_{t} - \tilde{V}^{\star}_{t} \big)
    &\leq  
    \alpha^{\lfloor \frac{N}{m} \rfloor}  \ \mathrm{sp} \Big(  V^{\star}_{t+N} - \tilde{V}^{\star}_{t+N}\Big)\\
    &\quad + 2 \alpha^{\lfloor \frac{N}{m} \rfloor}
    \sum_{i=1}^{N-\lfloor \frac{N}{m} \rfloor m - 1}
    \Big(
        \tilde{\varepsilon}_{t+\lfloor \frac{N}{m} \rfloor m+i}\ 
        \mathrm{sp} \big(  \tilde{V}^{\star}_{t+(\lfloor \frac{N}{m} \rfloor )m+1} \big)
        + \tilde{\delta}_{t+\lfloor \frac{N}{m} \rfloor m+i}
    \Big)\\
    &\quad + 2 \sum_{\ell = 0}^{\lfloor \frac{N}{m} \rfloor - 1}
    \alpha^{\ell}
    \sum_{i=1}^{m}
    \Big(
        \tilde{\varepsilon}_{t+\ell m+i}\ 
        \mathrm{sp} \big(  \tilde{V}^{\star}_{t+(\ell+1)m+1} \big)
        + \tilde{\delta}_{t+\ell m+i}
    \Big).
\end{align*}
Here,
\[
\tilde{\varepsilon}_{k} = \max_{(s,a)} \|P_{k}(\cdot \mid s,a)-\tilde{P}_{k}(\cdot \mid s,a)\|_{\operatorname{tv}},
\qquad
\tilde{\delta}_{k} = \max_{(s,a)} |r_{k}(s,a)-\tilde{r}_{k}(s,a)|.
\]
\end{lemma}
\noindent
This lemma shows that the span of the value function difference contracts 
geometrically over blocks of length $m$ — driven by the mixing of $(P_t, r_t)$ 
— up to additive terms capturing discrepancies in transition kernels and rewards 
between the two TVMDPs. The proof is given in~\citep{zhang2024predictive} and 
is omitted here.

\subsection{Proof of Theorem~\ref{th:main}}
\label{app_1}
We now prove Theorem~\ref{th:main} by decomposing the regret into update-time 
and skip-interval contributions. Recall the value functions $V_t^\star$, 
$Q_t^\star$, and $V_t^{\mathrm{alg}}$ defined in Section~\ref{app:notation}.

\medskip
With a slight abuse of notation, let us denote
\begin{align*}
    &a_{t}^\star = \pi_{t}^{\star}(s_{t}) = \arg\max_{a}Q_{t}^\star(s_{t},a),\\
    &a_{t}^{\mathrm{alg}} = \pi_{t}^{\mathrm{alg}}(\hat{s}_{\tau_{k(t)}}) 
    = \pi_{\tau_{k(t)}}^{\mathrm{alg}}(\hat{s}_{\tau_{k(t)}}).
\end{align*}

Hence, the dynamic regret can be expressed as
\[
    \mathcal{DR}(T)=\max_{s_0\in\mathcal{S}}\Big(V_0^\star(s_0)-V_0^{\mathrm{alg}}(s_0)\Big).
\]

Using the definition of the value function and adding and subtracting 
$\mathbb{E}_{s_{1} \sim P_0(\cdot \mid s_0,a_{0}^{\mathrm{alg}})} \big[ V_{1}^{\star}(s_{1}) \big]$, 
we can rewrite $V_0^\star(s_0)-V_0^{\mathrm{alg}}(s_0)$ as
\begin{align*}
    &V_0^\star(s_0)-V_0^{\mathrm{alg}}(s_0)\\
    &\quad = 
    r_0(s_{0},a_{0}^{\star})
    +
    \mathbb{E}_{s_{1} \sim P_0(\cdot \mid s_0,a_0^\star)}
    \big[ V_{1}^{\star}(s_{1}) \big] - r_0(s_{0},a_{0}^{\mathrm{alg}})
    -
    \mathbb{E}_{s_{1} \sim P_0(\cdot \mid s_0,a_{0}^{\mathrm{alg}})}
    \big[ V_{1}^{\mathrm{alg}}(s_{1}) \big]\\
    &\qquad + \mathbb{E}_{s_{1} \sim P_0(\cdot \mid s_0,a_{0}^{\mathrm{alg}})}
    \big[ V_{1}^{\star}(s_{1}) \big] - \mathbb{E}_{s_{1} \sim P_0(\cdot \mid s_0,a_{0}^{\mathrm{alg}})}
    \big[ V_{1}^{\star}(s_{1}) \big]\\
    &\quad = 
    Q_0^\star(s_0,a_0^\star)-Q_0^\star(s_0,a_0^{\mathrm{alg}})
    + \mathbb{E}_{s_1\sim P_0(\cdot\mid s_0,a_0^{\mathrm{alg}})}
    \big[V_1^\star(s_1)-V_1^{\mathrm{alg}}(s_1)\big].
\end{align*}

Then, by iterating for $t=0,\dots,T-1$, we obtain
\begin{equation}\label{eq:V0_gap}
\begin{aligned}
    &V_0^{\star}(s_0) - V_0^{\mathrm{alg}}(s_0)\\
    &\quad =
    Q_{0}^{\star}(s_{0},a_{0}^\star) - Q_{0}^{\star}(s_{0},a_{0}^{\mathrm{alg}})\\
    &\qquad + \sum_{k=1}^{T-1} \mathbb{E}_{s_{1}\sim P_{0}(\cdot \mid s_{0},a_{0}^{\mathrm{alg}})}\ 
    \dots\ 
    \mathbb{E}_{s_{k}\sim P_{k-1}(\cdot \mid s_{k-1},a_{k-1}^{\mathrm{alg}})} 
    \!\big[ Q_{k}^{\star}(s_{k},a_{k}^\star) - Q_{k}^{\star}(s_{k},a_{k}^{\mathrm{alg}}) \big].
\end{aligned}
\end{equation}

\medskip
Now define $\Delta_t(s_{0})$ as
\begin{equation}\label{eq:per-step-regret}
    \Delta_t(s_{0}) := \mathbb{E}_{s_{1}\sim P_{0}(\cdot \mid s_{0},a_{0}^{\mathrm{alg}})}\ 
    \dots\ 
    \mathbb{E}_{s_{t}\sim P_{t-1}(\cdot \mid s_{t-1},a_{t-1}^{\mathrm{alg}})} 
    \!\big[ Q_{t}^{\star}(s_{t},a_{t}^\star) - Q_{t}^{\star}(s_{t},a_{t}^{\mathrm{alg}}) \big],
    \qquad \forall t\geq 1,
\end{equation}
and $\Delta_{0}(s_{0}) := Q_{0}^{\star}(s_{0},a_{0}^\star) - Q_{0}^{\star}(s_{0},a_{0}^{\mathrm{alg}})$. 
Then, the dynamic regret can be written as
\begin{equation}
    \mathcal{DR}(T)
    =
    \max_{s_{0} \in \mathcal{S}}\ \sum_{t=0}^{T-1} \Delta_t(s_{0})
    =
    \max_{s_{0} \in \mathcal{S}}\Bigg\{ \underbrace{\sum_{t\in\mathcal{T}_{\mathrm{upd}}} \Delta_t(s_{0})}_{\text{update steps}}
    +
    \underbrace{ \sum_{t\in\mathcal{T}_{\mathrm{skip}}} \Delta_t(s_{0})}_{\text{skip steps}} \Bigg\}.
\label{eq:two-way-decomp}
\end{equation}

\medskip
Next, we state two lemmas: one for bounding the optimal $Q$-value difference 
at update times, i.e.\ $\Delta_{t}(s_{0})$ for $t \in \mathcal{T}_{\mathrm{upd}}$, 
and another for bounding the optimal $Q$-value difference at skip times, 
i.e.\ $\Delta_{t}(s_{0})$ for $t \in \mathcal{T}_{\mathrm{skip}}$.

\begin{lemma}\label{lem:Q_value_diff_obs}
Suppose $t \in \mathcal{T}_{\mathrm{upd}}$. Then $\Delta_{t}(s_{0})$ can be 
bounded as:
\begin{align*}
    \Delta_{t}(s_{0})
    &\leq 
    \alpha^{\lfloor \frac{H_{t}-1}{m} \rfloor}\ D 
    + \hat{\varepsilon}_{t,0}\ D 
    + 2 \sum_{\ell = 0}^{\lfloor \frac{H_{t}-1}{m} \rfloor - 1} \alpha^{\ell} 
    \sum_{i=1}^{m} \hat{\varepsilon}_{t,\ell m+i}\ D\\
    &\quad + 2 \alpha^{\lfloor \frac{H_{t}-1}{m} \rfloor} 
    \sum_{i=1}^{H_{t}-\lfloor \frac{H_{t}-1}{m} \rfloor m - 1} 
    \hat{\varepsilon}_{t,\lfloor \frac{H_{t}-1}{m} \rfloor m+i}\ D,
\end{align*}
where $\hat{\varepsilon}_{t,i} := \max_{(s,a)} \|P_{t+i}(\cdot \mid s,a)
-\hat{P}_{t-1}(\cdot \mid s,a)\|_{\operatorname{tv}}$ and $D$ is the diameter 
of $\mathcal{M}$ defined in Definition~\ref{def:diameter}, which satisfies 
$\mathrm{sp}(V_k^\star) \leq D$ and $\mathrm{sp}(\hat{W}_{t,k}^\star) \leq D$ 
for all $k$ under Assumption~\ref{ass:mixing}.
\end{lemma}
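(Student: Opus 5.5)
Since $t\in\mathcal{T}_{\mathrm{upd}}$, we have $\hat s_t=s_t$. The algorithm's action is therefore $a_t^{\mathrm{alg}}=\pi^{\mathrm{alg}}_t(s_t)$, the greedy action of the auxiliary MDP at its first stage. That is,
\[
a_t^{\mathrm{alg}}\in\arg\max_a \hat Q_t(s_t,a),\qquad \hat Q_t(s,a):=r_t(s,a)+\mathbb{E}_{s'\sim\hat P_{t-1}(\cdot\mid s,a)}\big[\hat W^\star_{t+1,k}(s')\big].
\]
The plan is to bound the pointwise gap $Q_t^\star(s_t,a_t^\star)-Q_t^\star(s_t,a_t^{\mathrm{alg}})$ uniformly in $s_t$. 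The expectation in \eqref{eq:per-step-regret} then passes through trivially.

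\textbf{Step 1: reduce to a span of $Q$-differences.} Write $g(a):=Q_t^\star(s_t,a)-\hat Q_t(s_t,a)$. Using $\hat Q_t(s_t,a_t^{\mathrm{alg}})\ge \hat Q_t(s_t,a_t^\star)$,
\[
Q_t^\star(s_t,a_t^\star)-Q_t^\star(s_t,a_t^{\mathrm{alg}})\le g(a_t^\star)-g(a_t^{\mathrm{alg}})\le \mathrm{sp}\big(g\big).
\]
The rewards in $g$ cancel, leaving
\[
g(a)=\mathbb{E}_{P_t}[V^\star_{t+1}]-\mathbb{E}_{\hat P_{t-1}}[\hat W^\star_{t+1,k}].
\]

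\textbf{Step 2: split the transition and value mismatch.} Add and subtract $\mathbb{E}_{\hat P_{t-1}(\cdot\mid s_t,a)}[V^\star_{t+1}]$. The term $\mathbb{E}_{P_t}[V^\star_{t+1}]-\mathbb{E}_{\hat P_{t-1}}[V^\star_{t+1}]$ has span at most $\hat\varepsilon_{t,0}\,\mathrm{sp}(V^\star_{t+1})\le\hat\varepsilon_{t,0}D$ by \eqref{eq:tv_span_bound}. The remaining term $\mathbb{E}_{\hat P_{t-1}(\cdot\mid s_t,a)}[V^\star_{t+1}-\hat W^\star_{t+1,k}]$ has span over $a$ at most $\mathrm{sp}(V^\star_{t+1}-\hat W^\star_{t+1,k})$, since an expectation of $f$ lies in $[\min f,\max f]$.

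\textbf{Step 3: apply Lemma~\ref{lem:multi_stage_error}.} Compare the true TVMDP $(P_{t+i},r_{t+i})$, which is mixing, with the auxiliary TVMDP $(\tilde P_{t+i},\tilde r_{t+i})=(\hat P_{t-1},r_{t+i})$ on the window $[t+1,t+H_t]$. Apply the lemma starting at time $t+1$ with $N=H_t-1$, so the endpoint is $t+H_t$. Here $\tilde\delta\equiv0$ and $\tilde\varepsilon_{t+j}=\hat\varepsilon_{t,j}$, and both spans are bounded by $D$. This gives the remaining terms in the stated bound. The first is $\alpha^{\lfloor (H_t-1)/m\rfloor}\mathrm{sp}(V^\star_{t+H_t}-\hat W^\star_{t+H_t,k})$. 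Since $\hat W^\star_{t+H_t,k}\equiv0$, this reduces to $\alpha^{\lfloor (H_t-1)/m\rfloor}\mathrm{sp}(V^\star_{t+H_t})\le\alpha^{\lfloor (H_t-1)/m\rfloor}D$, the truncation term. The other two are the block sums with $\hat\varepsilon_{t,\ell m+i}D$ and the leftover partial block.

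\textbf{Main obstacle.} The delicate point is applying the lemma to the truncated auxiliary problem. It must be phrased as a TVMDP with terminal value $0$ at $t+H_t$, whereas $V^\star$ continues to $T$. I would handle this by restating the lemma's recursion with an arbitrary terminal pair, which its proof via the Bellman operators permits. The index bookkeeping ($i$ versus $i-1$ offsets in $\hat\varepsilon_{t,\cdot}$) should then be checked against the indexing convention of the lemma. I would also need to justify that the mixing in Assumption~\ref{ass:mixing} is for the comparison pair required by the lemma. If that is doubtful, I would simply invoke the lemma as stated, with the mixing attached to $(P_t,r_t)$.
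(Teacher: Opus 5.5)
Your proposal is correct and follows the paper's proof essentially step for step. You use the same greedy-action argument to reduce the gap to $\mathrm{sp}\big(Q_t^\star(s_t,\cdot)-\hat Z^\star_{t,0}(s_t,\cdot)\big)$, the same add-and-subtract of $\mathbb{E}_{\hat P_{t-1}}[V^\star_{t+1}]$ with \eqref{eq:tv_span_bound}, and the same application of Lemma~\ref{lem:multi_stage_error} to the true TVMDP against the constant-kernel auxiliary one, with $\hat W^\star$ vanishing at $t+H_t$. The index offset you flag is genuine: starting the lemma at $t+1$ literally yields $\hat\varepsilon_{t,\ell m+i+1}$ rather than $\hat\varepsilon_{t,\ell m+i}$. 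However, the paper's proof makes the identical application without reconciling it, so this is not a departure from the paper's argument.
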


\medskip
The proof of this lemma is provided in Appendix~\ref{app_1}. We now present 
the following result, which constitutes the main technical contribution.

\begin{lemma}\label{lem:Q_value_diff_skip} Suppose $t \in \mathcal{T}_{\mathrm{skip}}$ and $\tau_{k(t)}$ is the most recent update time. Then $\Delta_{t}(s_{0})$ can be bounded as: \begin{align*} \Delta_{t}(s_{0}) &\leq \Delta_{\tau_{k(t)}}(s_0) + \alpha^{\lfloor \frac{T-t}{m}\rfloor}\ D + \bar{e}_{\tau_{k(t)}, t} + \mathcal{E}^{\mathrm{drift}}_{\tau_{k(t)},t} + \mathcal{E}^{\mathrm{state}}_{t}, \end{align*} where \begin{align*} \bar{e}_{\tau_{k(t)},t} &:= \bar{\varepsilon}_{\tau_{k(t)},t}\ D + \bar{\delta}_{\tau_{k(t)},t},\\ \mathcal{E}^{\mathrm{drift}}_{\tau_{k(t)},t} &:= 2\alpha^{\lfloor \frac{T-t}{m}\rfloor}\ \sum_{i=0}^{T-t - \lfloor\frac{T-t}{m}\rfloor m} \bar{e}_{\tau_{k(t)}, \tau_{k(t)}+\lfloor \frac{T-t}{m}\rfloor m+i}\\ &\quad + 2 \sum_{\ell =0}^{\lfloor \frac{T-t}{m}\rfloor - 1} \alpha^{\ell}\ \sum_{i=0}^{m-1} \bar{e}_{\tau_{k(t)}, \tau_{k(t)}+\ell m+i}, \end{align*} and \begin{align*} \mathcal{E}^{\mathrm{state}}_{t} &:= \max_{s\in\mathcal{S}} \mathrm{sp}\big(r_t(s,\cdot)\big) + \max_{s\in\mathcal{S}}\max_{a,a'\in\mathcal{A}} \|P_t(\cdot\mid s,a)-P_t(\cdot\mid s,a')\|_{\mathrm{tv}}\ D. \end{align*} Moreover, \[ \bar{\varepsilon}_{\tau_{k(t)},t} := \max_{s,a} \| P_{t}(\cdot\mid s,a) - P_{\tau_{k(t)}}(\cdot\mid s,a)\|_{\mathrm{tv}}, \] and \[ \bar{\delta}_{\tau_{k(t)},t} := \max_{s} \mathrm{sp} \big( r_{t}(s,\cdot) - r_{\tau_{k(t)}}(s,\cdot) \big). \] \end{lemma}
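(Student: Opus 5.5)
We propose to bound the per-step gap $Q_t^\star(s_t,a_t^\star)-Q_t^\star(s_t,a_t^{\mathrm{alg}})$ at a skip time $t$ by comparing it with the corresponding gap at the last update time $\tau:=\tau_{k(t)}$. Since the expectations in~\eqref{eq:per-step-regret} are over nonnegative gaps, it suffices to bound the integrand uniformly in $s_t$ and then relate it to $\Delta_\tau(s_0)$.

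\textbf{Step 1: separate the effect of state propagation.} At time $t$ the action is $a_t^{\mathrm{alg}}=\pi^{\mathrm{alg}}_\tau(\hat s_t)$, which may differ from the action the stale policy would choose at the true state $s_t$. We insert and subtract $Q_t^\star(s_t,\pi^{\mathrm{alg}}_\tau(s_t))$. The difference $Q_t^\star(s_t,\pi^{\mathrm{alg}}_\tau(s_t))-Q_t^\star(s_t,\pi^{\mathrm{alg}}_\tau(\hat s_t))$ is a difference of $Q_t^\star$ at the same state under two actions. By the Bellman form $Q_t^\star=r_t+P_tV_{t+1}^\star$ and~\eqref{eq:tv_span_bound}, it is at most
\[
\mathrm{sp}\bigl(r_t(s_t,\cdot)\bigr)+\max_{a,a'}\|P_t(\cdot\mid s_t,a)-P_t(\cdot\mid s_t,a')\|_{\mathrm{tv}}\,\mathrm{sp}(V^\star_{t+1}).
\]
Using $\mathrm{sp}(V^\star_{t+1})\le D$, this gives exactly $\mathcal{E}^{\mathrm{state}}_t$.

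\textbf{Step 2: compare the time-$t$ problem with the time-$\tau$ problem.} The remaining term is $Q_t^\star(s_t,a^\star_t)-Q_t^\star(s_t,\pi^{\mathrm{alg}}_\tau(s_t))$. Let $\tilde V,\tilde Q$ be the optimal value and $Q$-functions of an auxiliary TVMDP whose kernels and rewards are frozen and time-shifted to the update time, $(\tilde P_{t+i},\tilde r_{t+i}):=(P_{\tau+i},r_{\tau+i})$.

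We write the gap as the sum of three pieces:
\begin{itemize}
\item $Q_t^\star(s_t,a^\star_t)-\tilde Q_t(s_t,\cdot)$;
\item the $\tilde Q$-gap between the maximizing action and $\pi^{\mathrm{alg}}_\tau(s_t)$;
\item $\tilde Q_t(s_t,\pi^{\mathrm{alg}}_\tau(s_t))-Q_t^\star(s_t,\pi^{\mathrm{alg}}_\tau(s_t))$.
\end{itemize}
The middle piece mirrors the update-time gap. By construction the shifted problem coincides with the problem solved at $\tau$, so it is dominated by the integrand defining $\Delta_\tau(s_0)$. This yields the $\Delta_{\tau_{k(t)}}(s_0)$ term, bounded in turn by Lemma~\ref{lem:Q_value_diff_obs}.

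The two outer pieces combine into a span of a $Q$-difference in the action argument. Expanding one Bellman step, this is bounded by
\[
\max_s\mathrm{sp}\bigl(r_t(s,\cdot)-r_\tau(s,\cdot)\bigr)+\bar\varepsilon_{\tau,t}D+\mathrm{sp}(V^\star_{t+1}-\tilde V_{t+1}).
\]
The first two summands give $\bar e_{\tau,t}$.

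\textbf{Step 3: multi-stage propagation.} We apply Lemma~\ref{lem:multi_stage_error} with $N=T-t$ to $\mathrm{sp}(V^\star_{t+1}-\tilde V_{t+1})$; mixing is available for $(P_t,r_t)$ by Assumption~\ref{ass:mixing}. The per-stage discrepancies $\tilde\varepsilon_{t+i},\tilde\delta_{t+i}$ are then the drift mismatches between times $t+i$ and $\tau+i$, which we re-index as $\bar e_{\tau,\tau+\cdot}$ after bounding spans by $D$. The terminal term $\alpha^{\lfloor (T-t)/m\rfloor}\mathrm{sp}(V^\star_T-\tilde V_T)$ is handled by bounding $\mathrm{sp}(V^\star-\tilde V)$ at the end of the last full block by $D$. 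This gives the residual-horizon term $\alpha^{\lfloor (T-t)/m\rfloor}D$, and the block sums reproduce $\mathcal{E}^{\mathrm{drift}}_{\tau,t}$ with its factor $2$ and block structure.

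\textbf{Main obstacle.} The delicate point is Step 2's identification of the middle piece with $\Delta_\tau(s_0)$. That term is evaluated at the state distribution at time $t$, not at time $\tau$. We plan to use that the gap bound of Lemma~\ref{lem:Q_value_diff_obs} is uniform over states, so $\Delta_\tau$ can be read as this state-uniform bound. A secondary care point is the index bookkeeping when matching Lemma~\ref{lem:multi_stage_error}'s block sums to the stated form of $\mathcal{E}^{\mathrm{drift}}$, in particular the offsets $i=0$ versus $i=1$.
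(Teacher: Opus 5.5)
You have a gap at the pivot in Step 2. Your Step 1 is the paper's term (III), and the two outer pieces of your Step 2 play the role of its term (II), but the inserted pivot does not do what you need.

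\textbf{Where the pivot fails.} Suppose your auxiliary problem is a horizon-$T$ TVMDP with $(\tilde{P}_{t+i},\tilde{r}_{t+i})=(P_{\tau+i},r_{\tau+i})$ and $\tilde{V}_T\equiv 0$. Then $\tilde{Q}_t(s,a)=r_\tau(s,a)+\sum_{s'}P_\tau(s'\mid s,a)\tilde{V}_{t+1}(s')$, where $\tilde{V}_{t+1}$ is the optimal value of the time-$(\tau+1)$ problem with only $T-t-1$ stages left. By contrast, $Q^\star_\tau$ uses $V^\star_{\tau+1}$ with $T-\tau-1$ stages left.

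Your middle piece is therefore the suboptimality of $\pi^{\mathrm{alg}}_\tau(s_t)$ in a horizon-truncated problem. That is not the problem the planner solved at $\tau$, which uses $\hat{P}_{\tau-1}$ and horizon $H_\tau$. Nor is it the true time-$\tau$ problem whose action gap Lemma~\ref{lem:Q_value_diff_obs} controls. Action gaps are not monotone in the horizon, so ``dominated by the integrand defining $\Delta_\tau(s_0)$'' does not follow. Reading $\Delta_\tau$ as a state-uniform bound does not help either, because the issue is the function $\tilde{Q}_t\neq Q^\star_\tau$, not the state.

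The same misplacement shows up in Step 3. With $\tilde{V}_T\equiv 0$, the terminal term $\mathrm{sp}(V^\star_T-\tilde{V}_T)$ is zero. In other words, the horizon mismatch that generates the residual-horizon term has been pushed into the middle piece, where nothing controls it. Your alternative, bounding $\mathrm{sp}(V^\star-\tilde{V})$ at an interior block end by $D$, is also unjustified: two spans of at most $D$ only give $2D$.

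\textbf{How the paper avoids this.} The paper pivots on $Q^\star_\tau$ itself. It inserts $Q^\star_\tau(s_t,\pi^\star_\tau(s_t))$ and $Q^\star_\tau(s_t,\pi^{\mathrm{alg}}_\tau(s_t))$, so term (I) is literally the update-time gap at $s_t$. That gap is bounded state-uniformly by the argument of Lemma~\ref{lem:Q_value_diff_obs}, since $\pi^{\mathrm{alg}}_\tau(s)$ maximizes $\hat{Z}^\star_{\tau,0}(s,\cdot)$ for every $s$. The paper then uses $Q^\star_\tau(s_t,\pi^\star_t(s_t))\le Q^\star_\tau(s_t,\pi^\star_\tau(s_t))$ to get $\mathrm{sp}(Q^\star_t(s_t,\cdot)-Q^\star_\tau(s_t,\cdot))\le\bar\delta_{\tau,t}+\bar\varepsilon_{\tau,t}D+\mathrm{sp}(V^\star_{t+1}-V^\star_{\tau+1})$.

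Its time shift goes the other way from yours. The time-$t$ problem is moved back to start at $\tau$ and set to zero after $\tau+T-t$, so Lemma~\ref{lem:time_shift} gives $\tilde{V}_{\tau+1}=V^\star_{t+1}$. Lemma~\ref{lem:multi_stage_error} applied to $\mathcal{M}$ on $[\tau+1,\tau+T-t]$ then has terminal term $\mathrm{sp}(V^\star_{\tau+T-t})\le D$. That is precisely the source of $\alpha^{\lfloor (T-t)/m\rfloor}D$.

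\textbf{Repairing your route.} Your forward shift can be fixed equivalently by letting the auxiliary problem run for the full $T-\tau$ stages, past $T$. Then $\tilde{Q}_t=Q^\star_\tau$ and $\tilde{V}_T=V^\star_{\tau+T-t}$. Either way, the horizon mismatch between the time-$t$ and time-$\tau$ problems has to live in the value comparison, not inside the pivot.
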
 \medskip The proof of this lemma is provided in Appendix~\ref{app_2}. By combining the results of these two lemmas and summing over the whole time horizon, we obtain \begin{align*} \mathcal{DR}(T) &\leq \sum_{t \in \mathcal{T}_{\mathrm{upd}}} \mathcal{R}_{t} + \sum_{t \in \mathcal{T}_{\mathrm{skip}}} \Big( \mathcal{R}_{\tau_{k(t)}} + \alpha^{\lfloor \frac{T-t}{m}\rfloor}\ D + \bar{e}_{\tau_{k(t)},t} + \mathcal{E}^{\mathrm{drift}}_{\tau_{k(t)},t} + \mathcal{E}^{\mathrm{state}}_{t} \Big), \end{align*} where \begin{align*} \mathcal{R}_{t} &:= \alpha^{\lfloor \frac{H_{t}-1}{m} \rfloor}\ D + \hat{\varepsilon}_{t,0}\ D + \mathcal{E}^{\mathrm{est}}_{t},\\ \mathcal{E}^{\mathrm{est}}_{t} &:= 2 \sum_{\ell = 0}^{\lfloor \frac{H_{t}-1}{m} \rfloor - 1} \alpha^{\ell} \sum_{i=1}^{m} \hat{\varepsilon}_{t,\ell m+i}\ D + 2 \alpha^{\lfloor \frac{H_{t}-1}{m} \rfloor} \sum_{i=1}^{H_{t}-\lfloor \frac{H_{t}-1}{m} \rfloor m - 1} \hat{\varepsilon}_{t,\lfloor \frac{H_{t}-1}{m} \rfloor m+i}\ D. \end{align*} These can be written in order terms as \[ \mathcal{E}^{\mathrm{est}}_{t} = \mathcal{O}\!\left(\frac{1-\alpha^{\lfloor\frac{H_t}{m}\rfloor}}{1-\alpha} \max_i\hat{\varepsilon}_{t,i}\cdot mD\right), \] and \[ \mathcal{E}^{\mathrm{drift}}_{\tau_{k(t)},t} = \mathcal{O}\!\left(\frac{1-\alpha^{\lfloor\frac{T-t}{m}\rfloor}}{1-\alpha} \max_j\bar{e}_{\tau_{k(t)},j}\cdot m\right). \] The additional term $\mathcal{E}^{\mathrm{state}}_{t}$ captures the loss due to evaluating the last updated policy at the propagated-MAP state estimate rather than at the true state. Thus the skip-time contribution contains both the time-variation drift term $\mathcal{E}^{\mathrm{drift}}_{\tau_{k(t)},t}$ and the state-mismatch term $\mathcal{E}^{\mathrm{state}}_{t}$, which gives the expression stated in Theorem~\ref{th:main}. This concludes the proof.

\subsection{Analysis of Dynamic Regret at Update Times (Proof of Lemma~\ref{lem:Q_value_diff_obs})}
\label{app_2}
We now bound the regret incurred at update times.

We start the proof by introducing an auxiliary finite-horizon TVMDP at any 
update time $t \in \mathcal{T}_{\mathrm{upd}}$, $\hat{\mathcal{M}}_{t}$ as:
\begin{align*}
    \hat{\mathcal{M}}_{t} = \Big(\mathcal{S}, \mathcal{A}, \hat{P}_{t-1}, \{r_{t+h}\}_{h=0}^{H_{t}-1}, H_{t}\Big)
\end{align*}
with transition kernel $\hat{P}_{t-1}$, which is the maximum likelihood estimate 
computed at update time $t$, fixed for the whole horizon 
$H_{t} = \min\{\bar{H}, T-t\}$, and the same rewards $\{r_{t+h}\}_{h=0}^{H_t-1}$ 
as the original TVMDP.

We denote its value function by $\hat{W}^{\star}_{t,h}$ as introduced in 
Section~\ref{app:notation}, which satisfies
\begin{align*}
    \hat{W}^{\star}_{t,h}(s)
    = \max_{a \in \mathcal{A}} \Big \{ r_{t+h}(s, a) + \mathbb{E}_{s' \sim \hat{P}_{t-1}(\cdot \mid s, a)} \big[ \hat{W}^{\star}_{t,h+1}(s') \big] \Big \},\qquad h=0, \cdots, H_{t}-1,
\end{align*}
for any $s \in \mathcal{S}$, with terminal condition $\hat{W}^\star_{t,H_{t}} \equiv 0$. 

In addition, we denote its state--action value function by $\hat{Z}^{\star}_{t,h}$ as
\begin{align*}
    \hat{Z}^{\star}_{t,h}(s, a) = r_{t+h}(s, a) + \mathbb{E}_{s' \sim \hat{P}_{t-1}(\cdot \mid s, a)} \big[ \hat{W}^{\star}_{t,h+1}(s') \big],\qquad h=0, \cdots, H_{t}-1,
\end{align*}
for any $(s,a) \in \mathcal{S} \times \mathcal{A}$, where $\hat{W}^{\star}_{t,h}(s)
    = \max_{a \in \mathcal{A}} \hat{Z}^{\star}_{t,h}(s, a)$.

Now we upper bound $\Delta_{t}(s_{0})$. For that, rewrite 
$Q_{t}^{\star}(s_{t},a_{t}^\star) - Q_{t}^{\star}(s_{t},a_{t}^{\mathrm{alg}})$ 
by adding and subtracting $\hat{Z}_{t,0}^{\star}(s_{t},a_{t}^{\mathrm{alg}})$ 
and $\hat{Z}_{t,0}^{\star}(s_{t},a_{t}^{\star})$:
\begin{align*}
    Q_{t}^{\star}(s_{t},a_{t}^\star) - Q_{t}^{\star}(s_{t},a_{t}^{\mathrm{alg}})
    &=   Q_{t}^{\star}(s_{t},a_{t}^\star) - \hat{Z}_{t,0}^{\star}(s_{t},a_{t}^{\star})
    + \hat{Z}_{t,0}^{\star}(s_{t},a_{t}^{\mathrm{alg}}) - Q_{t}^{\star}(s_{t},a_{t}^{\mathrm{alg}})\\
    &\quad + \hat{Z}_{t,0}^{\star}(s_{t},a_{t}^{\star})
    - \hat{Z}_{t,0}^{\star}(s_{t},a_{t}^{\mathrm{alg}}).
\end{align*}

Since $a^{\mathrm{alg}}_{t} = \arg\max_{a} \hat{Z}_{t, 0}^{\star}(s_{t},a)$, we have
\[
\hat{Z}_{t,0}^{\star}(s_{t},a_{t}^{\star})
    - \hat{Z}_{t,0}^{\star}(s_{t},a_{t}^{\mathrm{alg}}) \leq 0,
\]
and therefore
\begin{align*}
    Q_{t}^{\star}(s_{t},a_{t}^\star) - Q_{t}^{\star}(s_{t},a_{t}^{\mathrm{alg}})
    &\leq \max_{a} \Big( Q_{t}^{\star}(s_{t},a) - \hat{Z}_{t,0}^{\star}(s_{t},a) \Big) 
    - \min_{a} \Big( Q_{t}^{\star}(s_{t},a) - \hat{Z}_{t,0}^{\star}(s_{t},a) \Big).
\end{align*}

Expanding the terms and using that the rewards cancel since both MDPs share 
the same reward functions, we obtain
\begin{align*}
    &Q_{t}^{\star}(s_{t},a_{t}^\star) - Q_{t}^{\star}(s_{t},a_{t}^{\mathrm{alg}})\\ 
    &\quad \leq \max_{a\in\mathcal{A}} \bigg( \mathbb{E}_{s'\sim P_t(\cdot\mid s_{t},a)} [V_{t+1}^{\star}(s')] - \mathbb{E}_{s'\sim \hat{P}_{t-1}(\cdot\mid s_{t},a)} [\hat{W}_{t,1}^{\star}(s')] \bigg)\\
    &\qquad - \min_{a\in\mathcal{A}} \bigg( \mathbb{E}_{s'\sim P_t(\cdot\mid s_{t},a)} [V_{t+1}^{\star}(s')] - \mathbb{E}_{s'\sim \hat{P}_{t-1}(\cdot\mid s_{t},a)} [\hat{W}_{t,1}^{\star}(s')] \bigg)\\
    &\quad = \mathrm{sp} \bigg( \mathbb{E}_{s'\sim P_t(\cdot\mid s_{t},\cdot)}[V_{t+1}^{\star}(s')] 
    -  \mathbb{E}_{s'\sim \hat{P}_{t-1}(\cdot\mid s_{t},\cdot)}[\hat{W}_{t,1}^{\star}(s')] \bigg).
\end{align*}

Now, adding and subtracting 
$\mathbb{E}_{s' \sim \hat{P}_{t-1}(\cdot\mid s_{t},a)}[V_{t+1}^\star(s')]$, we obtain
\begin{align*}
    Q_{t}^{\star}(s_{t},a_{t}^\star) - Q_{t}^{\star}(s_{t},a_{t}^{\mathrm{alg}})
    &\leq \mathrm{sp} \bigg( \mathbb{E}_{s'\sim P_t(\cdot\mid s_{t},\cdot)}[V_{t+1}^{\star}(s')] 
    - \mathbb{E}_{s' \sim \hat{P}_{t-1}(\cdot\mid s_{t},\cdot)}[V_{t+1}^\star(s')] \bigg)\\
    &\quad +\mathrm{sp} \bigg(
    \mathbb{E}_{s' \sim \hat{P}_{t-1}(\cdot\mid s_{t},\cdot)}[V_{t+1}^\star(s')]
    - \mathbb{E}_{s' \sim \hat{P}_{t-1}(\cdot\mid s_{t},\cdot)}[\hat{W}_{t,1}^{\star}(s')] \bigg).
\end{align*}

Using the standard bound
\begin{align*}
    \mathrm{sp} \bigg( \mathbb{E}_{s'\sim P_t(\cdot\mid s_{t},\cdot)}[V_{t+1}^{\star}(s')] 
        - \mathbb{E}_{s' \sim \hat{P}_{t-1}(\cdot\mid s_{t},\cdot)}[V_{t+1}^\star(s')] \bigg) \leq
    \max_{a\in\mathcal A}\|P_t(\cdot\mid s_t,a)-\hat{P}_{t-1}(\cdot\mid s_t,a)\|_{\mathrm{tv}}\,
    \mathrm{sp}(V_{t+1}^\star),
\end{align*}
we get
\begin{equation}\label{eq:helper}
    \begin{aligned}
        Q_{t}^{\star}(s_{t},a_{t}^\star) - Q_{t}^{\star}(s_{t},a_{t}^{\mathrm{alg}})
        &\leq 
        \max_{s,a}\| P_t(\cdot\mid s,a) - \hat{P}_{t-1}(\cdot\mid s,a) \|_{\mathrm{tv}}\ 
        \mathrm{sp} \big(V_{t+1}^{\star}\big)\\ 
        &\quad + \mathrm{sp} \Big(
         V_{t+1}^{\star}- \hat{W}_{t,1}^{\star} \Big).
    \end{aligned}
\end{equation}

To upper bound $\mathrm{sp} \big(V^{\star}_{t+1} - \hat{W}^{\star}_{t,1}\big)$, 
we apply Lemma~\ref{lem:multi_stage_error} with $\tilde{P}_k = \hat{P}_{t-1}$, 
$\tilde{r}_k = r_{t+k}$ for all $k$, and $\tilde{\delta} = 0$ since both MDPs 
share the same rewards. This gives
\begin{align*}
    \hat{\varepsilon}_{t,k} 
    &= \max_{s,a}\| P_{t+k}(\cdot\mid s,a) - \hat{P}_{t-1}(\cdot\mid s,a) \|_{\mathrm{tv}},
\end{align*}
and consequently
\begin{align*}
    \mathrm{sp} \big( V^{\star}_{t+1} - \hat{W}^{\star}_{t,1} \big)
    &\leq  
    \alpha^{\lfloor \frac{H_{t}-1}{m} \rfloor}  \ \mathrm{sp} \Big(  V^{\star}_{t+H_{t}} \Big)\\
    &\quad + 2 \alpha^{\lfloor \frac{H_{t}-1}{m} \rfloor} 
    \sum_{i=1}^{H_{t}-\lfloor \frac{H_{t}-1}{m} \rfloor m - 1} 
    \hat{\varepsilon}_{t,\lfloor \frac{H_{t}-1}{m} \rfloor m+i}\ 
    \mathrm{sp} \big(  \hat{W}^{\star}_{t,(\lfloor \frac{H_{t}-1}{m} \rfloor )m+1} \big) \\
    &\quad + 2 \sum_{\ell = 0}^{\lfloor \frac{H_{t}-1}{m} \rfloor - 1} 
    \alpha^{\ell} 
    \sum_{i=1}^{m} 
    \hat{\varepsilon}_{t,\ell m+i}\ 
    \mathrm{sp} \big(  \hat{W}^{\star}_{t,(\ell+1)m+1} \big),
\end{align*}
where we used that $\hat{W}^{\star}_{t,H_{t}} \equiv 0$.

Going back to~\eqref{eq:helper}, we conclude that  
\begin{equation}\label{eq:helper_new}
\begin{aligned}
        &Q_{t}^{\star}(s_{t},a_{t}^\star) - Q_{t}^{\star}(s_{t},a_{t}^{\mathrm{alg}})\\
    &\quad \leq 
    \hat{\varepsilon}_{t,0}\ \mathrm{sp} \big(V_{t+1}^{\star}\big) 
    + \alpha^{\lfloor \frac{H_{t}-1}{m} \rfloor}  \ \mathrm{sp} \big(  V^{\star}_{t+H_{t}} \big)\\
    &\qquad + 2 \alpha^{\lfloor \frac{H_{t}-1}{m} \rfloor} 
    \sum_{i=1}^{H_{t}-\lfloor \frac{H_{t}-1}{m} \rfloor m - 1} 
    \hat{\varepsilon}_{t,\lfloor \frac{H_{t}-1}{m} \rfloor m+i}\ 
    \mathrm{sp} \big(  \hat{W}^{\star}_{t,(\lfloor \frac{H_{t}-1}{m} \rfloor )m+1} \big) \\
    &\qquad + 2 \sum_{\ell = 0}^{\lfloor \frac{H_{t}-1}{m} \rfloor - 1} 
    \alpha^{\ell} 
    \sum_{i=1}^{m} 
    \hat{\varepsilon}_{t,\ell m+i}\ 
    \mathrm{sp} \big(  \hat{W}^{\star}_{t,(\ell+1)m+1} \big).
\end{aligned}
\end{equation}

Now since this bound holds for any $s_t \in \mathcal{S}$, we conclude that:
\begin{align*}
    \Delta_{t}(s_{0})
    &\leq 
    \hat{\varepsilon}_{t,0}\ \mathrm{sp} \big(V_{t+1}^{\star}\big) 
    + \alpha^{\lfloor \frac{H_{t}-1}{m} \rfloor}  \ \mathrm{sp} \big(  V^{\star}_{t+H_{t}} \big)\\
    &\quad + 2 \alpha^{\lfloor \frac{H_{t}-1}{m} \rfloor} 
    \sum_{i=1}^{H_{t}-\lfloor \frac{H_{t}-1}{m} \rfloor m - 1} 
    \hat{\varepsilon}_{t,\lfloor \frac{H_{t}-1}{m} \rfloor m+i}\ 
    \mathrm{sp} \big(  \hat{W}^{\star}_{t,(\lfloor \frac{H_{t}-1}{m} \rfloor )m+1} \big) \\
    &\quad + 2 \sum_{\ell = 0}^{\lfloor \frac{H_{t}-1}{m} \rfloor - 1} 
    \alpha^{\ell} 
    \sum_{i=1}^{m} 
    \hat{\varepsilon}_{t,\ell m+i}\ 
    \mathrm{sp} \big(  \hat{W}^{\star}_{t,(\ell+1)m+1} \big),
\end{align*}
which concludes the proof.

\subsection{Analysis of Dynamic Regret During Skip Intervals (Proof of Lemma~\ref{lem:Q_value_diff_skip})}
\label{app_3}

We now analyze the regret accumulated during skip intervals.

Suppose $t \in \mathcal{T}_{\mathrm{skip}}$. At time $t$, the skip-update 
algorithm relies on the most recent update at time $\tau_{k(t)}$. In 
particular, the agent uses the last updated policy evaluated at the 
propagated-MAP state estimate $\hat{s}_{t}$ and executes
\begin{align*}
    a_{t}^{\mathrm{alg}} = \pi^{\mathrm{alg}}_{\tau_{k(t)}}(\hat{s}_{t}).
\end{align*}

\medskip
\noindent
Now let $L_{t}(s_{t})$ be defined as
\begin{align*}
    L_{t}(s_{t}) = Q^{\star}_{t} \big(s_{t}, \pi^{\star}_{t}(s_{t})\big) 
    - Q^{\star}_{t}\big(s_{t}, \pi^{\mathrm{alg}}_{\tau_{k(t)}}(\hat{s}_{t}) \big).
\end{align*}

We rewrite $L_{t}(s_{t})$ by adding and subtracting the terms 
$Q^{\star}_{\tau_{k(t)}} \big(s_t, \pi^{\star}_{\tau_{k(t)}}(s_t) \big)$,  
$Q^{\star}_{\tau_{k(t)}} \big(s_t, \pi^{\mathrm{alg}}_{\tau_{k(t)}}(s_t) \big)$, and  
$Q^{\star}_{t}\big(s_t, \pi^{\mathrm{alg}}_{\tau_{k(t)}}(s_t) \big)$ as follows:
\begin{align*}
    &L_{t}(s_t)\\
    &= Q^{\star}_{t} \big(s_t, \pi^{\star}_{t}(s_t)\big) 
    - Q^{\star}_{\tau_{k(t)}} \big(s_t, \pi^{\star}_{\tau_{k(t)}}(s_t) \big)\\
    &\quad + Q^{\star}_{\tau_{k(t)}} \big(s_t, \pi^{\star}_{\tau_{k(t)}}(s_t) \big) 
    - Q^{\star}_{\tau_{k(t)}} \big(s_t, \pi^{\mathrm{alg}}_{\tau_{k(t)}}(s_t) \big)\\
    &\quad + Q^{\star}_{\tau_{k(t)}} \big(s_t, \pi^{\mathrm{alg}}_{\tau_{k(t)}}(s_t)\big) 
    - Q^{\star}_{t}\big(s_t, \pi^{\mathrm{alg}}_{\tau_{k(t)}}(s_t) \big)\\
    &\quad + Q^{\star}_{t} \big(s_t, \pi^{\mathrm{alg}}_{\tau_{k(t)}}(s_t)\big) 
    - Q^{\star}_{t}\big(s_t, \pi^{\mathrm{alg}}_{\tau_{k(t)}}(\hat{s}_{t}) \big)\\
    &= Q^{\star}_{\tau_{k(t)}} \big(s_t, \pi^{\star}_{\tau_{k(t)}}(s_t) \big) 
    - Q^{\star}_{\tau_{k(t)}} \big(s_t, \pi^{\mathrm{alg}}_{\tau_{k(t)}}(s_t) \big)\\
    &\quad + \Big( Q^{\star}_{t} \big(s_t, \pi^{\star}_{t}(s_t)\big) 
    - Q^{\star}_{\tau_{k(t)}} \big(s_t, \pi^{\star}_{\tau_{k(t)}}(s_t) \big) \Big) 
    - \Big( Q^{\star}_{t}\big(s_t, \pi^{\mathrm{alg}}_{\tau_{k(t)}}(s_t) \big)  
    - Q^{\star}_{\tau_{k(t)}} \big(s_t, \pi^{\mathrm{alg}}_{\tau_{k(t)}}(s_t)\big) \Big)\\
    &\quad + Q^{\star}_{t} \big(s_t, \pi^{\mathrm{alg}}_{\tau_{k(t)}}(s_t)\big) 
    - Q^{\star}_{t}\big(s_t, \pi^{\mathrm{alg}}_{\tau_{k(t)}}(\hat{s}_{t}) \big)\\
    &\leq Q^{\star}_{\tau_{k(t)}} \big(s_t, \pi^{\star}_{\tau_{k(t)}}(s_t) \big) 
    - Q^{\star}_{\tau_{k(t)}} \big(s_t, \pi^{\mathrm{alg}}_{\tau_{k(t)}}(s_t) \big)\\
    &\quad + \Big( Q^{\star}_{t} \big(s_t, \pi^{\star}_{t}(s_t)\big) 
    - Q^{\star}_{\tau_{k(t)}} \big(s_t, \pi^{\star}_{t}(s_t) \big) \Big) 
    - \Big( Q^{\star}_{t}\big(s_t, \pi^{\mathrm{alg}}_{\tau_{k(t)}}(s_t) \big)  
    - Q^{\star}_{\tau_{k(t)}} \big(s_t, \pi^{\mathrm{alg}}_{\tau_{k(t)}}(s_t)\big) \Big)\\
    &\quad + Q^{\star}_{t} \big(s_t, \pi^{\mathrm{alg}}_{\tau_{k(t)}}(s_t)\big) 
    - Q^{\star}_{t}\big(s_t, \pi^{\mathrm{alg}}_{\tau_{k(t)}}(\hat{s}_{t}) \big),
\end{align*}
where in the last inequality we used $\pi_{\tau_{k(t)}}^{\star}(s_t) 
= \arg\max_{a} Q^{\star}_{\tau_{k(t)}}(s_t, a)$ and hence
\[
    Q^{\star}_{\tau_{k(t)}} \big(s_t, \pi^{\star}_{t}(s_t) \big) 
    \leq Q^{\star}_{\tau_{k(t)}} \big(s_t, \pi^{\star}_{\tau_{k(t)}}(s_t) \big).
\]

Then we continue as follows:
\begin{align*}
    L_{t}(s_t) 
    &\leq Q^{\star}_{\tau_{k(t)}} \big(s_t, \pi^{\star}_{\tau_{k(t)}}(s_t) \big) 
    - Q^{\star}_{\tau_{k(t)}} \big(s_t, \pi^{\mathrm{alg}}_{\tau_{k(t)}}(s_t) \big)\\
    &\quad + \max_{a}\ \Big( Q^{\star}_{t} \big(s_t, a\big) 
    - Q^{\star}_{\tau_{k(t)}} \big(s_t, a \big) \Big) 
    - \min_{a}\ \Big( Q^{\star}_{t}\big(s_t, a \big)  
    - Q^{\star}_{\tau_{k(t)}} \big(s_t, a\big) \Big)\\
    &\quad + Q^{\star}_{t} \big(s_t, \pi^{\mathrm{alg}}_{\tau_{k(t)}}(s_t)\big) 
    - Q^{\star}_{t}\big(s_t, \pi^{\mathrm{alg}}_{\tau_{k(t)}}(\hat{s}_{t}) \big)\\
    &= \underbrace{Q^{\star}_{\tau_{k(t)}} \big(s_t, \pi^{\star}_{\tau_{k(t)}}(s_t) \big) 
    - Q^{\star}_{\tau_{k(t)}} \big(s_t, \pi^{\mathrm{alg}}_{\tau_{k(t)}}(s_t) \big)}_{\text{(I)}} \\
    &\quad + \underbrace{\mathrm{sp} \Big( Q^{\star}_{t} (s_t, \cdot) 
    - Q^{\star}_{\tau_{k(t)}} (s_t, \cdot ) \Big)}_{\text{(II)}} \\
    &\quad + \underbrace{Q^{\star}_{t} \big(s_t, \pi^{\mathrm{alg}}_{\tau_{k(t)}}(s_t)\big) 
    - Q^{\star}_{t}\big(s_t, \pi^{\mathrm{alg}}_{\tau_{k(t)}}(\hat{s}_{t}) \big)}_{\text{(III)}}.
\end{align*}

\medskip
\noindent
\textbf{Bounding term (I).} Note that for any $s \in \mathcal{S}$, the difference
\[
    Q^{\star}_{\tau_{k(t)}} \big(s, \pi^{\star}_{\tau_{k(t)}}(s) \big)
    -
    Q^{\star}_{\tau_{k(t)}} \big(s, \pi^{\mathrm{alg}}_{\tau_{k(t)}}(s) \big)
\]
is bounded by the same argument as in the update-time analysis in 
Appendix~\ref{app_1}, yielding the bound in~\eqref{eq:helper_new} 
evaluated at $\tau_{k(t)}$.

\medskip
\noindent
\textbf{Bounding term (II).} We write
\begin{align*}
    \mathrm{sp} \Big( Q^{\star}_{t} (s, \cdot) - Q^{\star}_{\tau_{k(t)}} (s, \cdot ) \Big)
    &\leq \max_{s} \mathrm{sp} \big( r_{t}(s,\cdot) - r_{\tau_{k(t)}}(s,\cdot) \big) \\
    &\quad + \max_{s,a} \| P_{t}(\cdot\mid s,a) 
    - P_{\tau_{k(t)}}(\cdot\mid s,a)\|_{\mathrm{tv}}\ \mathrm{sp} \big(V^{\star}_{t+1}\big) \\
    &\quad + \mathrm{sp} \big(V^{\star}_{t+1} - V^{\star}_{\tau_{k(t)}+1} \big),
\end{align*}
where we used~\eqref{eq:tv_span_bound} and expanded the $Q$-functions using 
their Bellman definitions. Defining
\begin{align*}
    \bar{\varepsilon}_{\tau_{k(t)},t} 
    &:= \max_{s,a} \| P_{t}(\cdot\mid s,a) - P_{\tau_{k(t)}}(\cdot\mid s,a)\|_{\mathrm{tv}},\\
    \bar{\delta}_{\tau_{k(t)},t} 
    &:= \max_{s} \mathrm{sp} \big( r_{t}(s,\cdot) - r_{\tau_{k(t)}}(s,\cdot) \big),
\end{align*}
we obtain
\begin{align*}
    \mathrm{sp} \Big( Q^{\star}_{t} (s, \cdot) - Q^{\star}_{\tau_{k(t)}} (s, \cdot ) \Big)
    \leq \bar{\delta}_{\tau_{k(t)}, t} 
    + \bar{\varepsilon}_{\tau_{k(t)}, t}\ \tilde{V} 
    + \mathrm{sp} \big(V^{\star}_{t+1} - V^{\star}_{\tau_{k(t)}+1} \big).
\end{align*}

To bound $\mathrm{sp}\big(V^{\star}_{t+1} - V^{\star}_{\tau_{k(t)}+1}\big)$, 
we introduce the auxiliary TVMDP 
$\tilde{\mathcal{M}} = \Big(\mathcal{S}, \mathcal{A}, T, 
\{\tilde{P}_{\ell}\}_{\ell=0}^{T-1}, \{\tilde{r}_{\ell}\}_{\ell=0}^{T-1}\Big)$ 
with
\begin{align*}
    \tilde{r}_\ell(s,a)
    &=
    \begin{cases}
        r_\ell(s,a) & \ell = 0,\dots,\tau_{k(t)}-1, \\[1mm]
        r_{t+j}(s,a) & \ell = \tau_{k(t)} + j,\ j = 0,\dots,T-t-1, \\[1mm]
        0 & \ell \ge T - t + \tau_{k(t)},
    \end{cases}\\[4mm]
    \tilde{P}_\ell(\cdot \mid s,a)
    &=
    \begin{cases}
        P_\ell(\cdot \mid s,a) & \ell = 0,\dots,\tau_{k(t)}-1, \\[1mm]
        P_{t+j}(\cdot \mid s,a) & \ell = \tau_{k(t)} + j,\ j = 0,\dots,T-t-1, \\[1mm]
        P_{T-1}(\cdot \mid s,a) & \ell \ge T - t + \tau_{k(t)}.
    \end{cases}
\end{align*}

Let $\{\tilde{V}_\ell^{\star}\}_{\ell=0}^T$ denote the optimal value functions 
of $\tilde{\mathcal{M}}$, satisfying
\begin{align*}
    \tilde{V}_T^{\star}(s) &= 0,\\
    \tilde{V}_\ell^{\star}(s) &= \max_{a\in\mathcal{A}}
    \left\{ \tilde{r}_\ell(s,a) 
    + \mathbb{E}_{s' \sim \tilde{P}_\ell(\cdot \mid s,a)}
    [\tilde{V}_{\ell+1}^{\star}(s')] \right\}, 
    \quad \ell = 0,\dots,T-1,
\end{align*}
for all $s\in\mathcal{S}$.

\begin{lemma}[Time-shifted auxiliary MDP]\label{lem:time_shift}
The optimal value functions of $\bar{\mathcal{M}}$,
$\{\bar V_\ell^*\}_{\ell=0}^T$ satisfy
\begin{align*}
    &\bar V_{\ell}^*(s) = V_{t - \tau_{k(t)} + \ell}^*(s),\qquad \ell = \tau_{k(t)},\dots,T-t+\tau_{k(t)},\\
    &\bar V_\ell^*(s) = 0,\qquad \qquad \ \ \ \ \ \ell = T - t + \tau_{k(t)},\dots,T, 
\end{align*}
for all $s \in \mathcal S$
\end{lemma}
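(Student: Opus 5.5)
The natural route is backward induction on $\ell$ using the Bellman recursions of the two models. We read $\bar{\mathcal{M}}$ as the auxiliary TVMDP $\tilde{\mathcal{M}}$ just constructed, with $\bar V^*_\ell=\tilde V^\star_\ell$. We prove the zero block first and then the shifted block, since the latter needs the former as its base case.

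\textbf{Step 1 (the zero tail).} For $\ell\ge T-t+\tau_{k(t)}$ we have $\tilde r_\ell\equiv 0$ and $\tilde V^\star_T\equiv0$. We show by downward induction from $\ell=T$ that $\tilde V^\star_\ell\equiv 0$. Assuming $\tilde V^\star_{\ell+1}\equiv0$,
\[
\tilde V^\star_\ell(s)=\max_a\{0+\mathbb{E}_{s'\sim\tilde P_\ell(\cdot\mid s,a)}[0]\}=0 .
\]
The particular choice $\tilde P_\ell=P_{T-1}$ on this block is irrelevant, since any stochastic kernel annihilates the zero function.

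\textbf{Step 2 (the shifted block).} Write $\ell=\tau_{k(t)}+j$ for $j=0,\dots,T-t$. The claim is $\tilde V^\star_{\tau_{k(t)}+j}=V^\star_{t+j}$, and we induct downward on $j$.

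For the base case $j=T-t$, the index $\ell=T-t+\tau_{k(t)}$ lies in the zero block, so $\tilde V^\star_\ell\equiv0=V^\star_T$ by Step 1.

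For the inductive step, suppose $\tilde V^\star_{\tau_{k(t)}+j+1}=V^\star_{t+j+1}$ with $j\le T-t-1$. By construction $\tilde r_{\tau_{k(t)}+j}=r_{t+j}$ and $\tilde P_{\tau_{k(t)}+j}=P_{t+j}$. The Bellman recursion of $\tilde{\mathcal M}$ then reads
\[
\tilde V^\star_{\tau_{k(t)}+j}(s)=\max_a\{r_{t+j}(s,a)+\mathbb{E}_{s'\sim P_{t+j}(\cdot\mid s,a)}[V^\star_{t+j+1}(s')]\}.
\]
The right-hand side is exactly $V^\star_{t+j}(s)$ by the Bellman recursion of $\mathcal M$. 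The two blocks share the endpoint $\ell=T-t+\tau_{k(t)}$, and both descriptions give $0$ there, so they are consistent.

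\textbf{Main obstacle.} The mathematics is routine, so the only real difficulty is index bookkeeping. We must verify that the shifted block $\ell\in[\tau_{k(t)},T-t+\tau_{k(t)}-1]$ sits inside $[0,T-1]$. This holds because $\tau_{k(t)}\le t$, hence $T-t+\tau_{k(t)}\le T$. We must also check that the block boundary is handled by the terminal zero rather than by $P_{T-1}$. The prefix $\ell<\tau_{k(t)}$, where $\tilde{\mathcal M}$ agrees with $\mathcal M$, is not needed for the statement.
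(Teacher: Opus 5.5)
Your proof is correct and follows the paper's argument essentially verbatim. Both proceed by a first backward induction showing $\bar V^*_\ell \equiv 0$ on the zero-reward tail $\ell \ge T-t+\tau_{k(t)}$. A second backward induction on $j$ then matches the Bellman recursion of $\bar{\mathcal M}$ at $\tau_{k(t)}+j$ with that of $\mathcal M$ at $t+j$, via $\bar r_{\tau_{k(t)}+j}=r_{t+j}$ and $\bar P_{\tau_{k(t)}+j}=P_{t+j}$. Reading $\bar{\mathcal M}$ as the auxiliary $\tilde{\mathcal M}$ and checking that $T-t+\tau_{k(t)}\le T$ are sensible clarifications that the paper leaves implicit.
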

\begin{proof}
Since $\bar V_T^*(s)=0$ for all $s$, backward induction yields
\begin{align*}
    \bar V_\ell^*(s) = \max_{a} \Bigl\{ 0 + \mathbb{E}_{s' \sim \bar P_\ell( . \mid s,a)}[0] \Bigr\} = 0, \qquad  \forall s,\ \forall \ell \ge T-t+\tau_{k(t)}.
\end{align*}
Thus,
\begin{align*}
    \bar V_{T-t+\tau_{k(t)}}^*(s) = 0 = V_T^*(s), \qquad \forall s\in\mathcal S.
\end{align*}
We now prove, by backward induction on $j=0, \dots, T-t$, that
\[
    \bar V_{\tau+j}^*(s) = V_{t+j}^*(s),  \qquad \forall s\in\mathcal S.
\]
For $j=T-t$ we established above:
\[
    \bar V_{T-t+\tau_{k(t)}}^*(s) = 0 =  V_T^*(s).
\]
Assume the induction hypothesis
\[
    \bar V_{\tau_{k(t)}+j}^*(s) = V_{t+j}^*(s),
    \qquad \forall s\in\mathcal S,
\]
for some $j\in\{1,\dots,T-t\}$.
Using the Bellman optimality equations of $\bar{\mathcal M}$ at time $\tau_{k(t)}+j-1$ and of the original MDP at time $t+j-1$, and the fact that
\[
    \bar r_{\tau_{k(t)}+j-1} = r_{t+j-1},
    \qquad
    \bar P_{\tau_{k(t)}+j-1} = P_{t+j-1},
\]
together with the induction hypothesis, we obtain
\begin{align*}
\bar V_{\tau+j-1}^*(s)
&=
\max_{a\in\mathcal A}
\left\{
    \bar r_{\tau_{k(t)}+j-1}(s,a)
    +
    \sum_{s'} \bar P_{\tau+j-1}(s'\mid s,a)\,
        \bar V_{\tau_{k(t)}+j}^*(s')
\right\} \\
&=
\max_{a\in\mathcal A}
\left\{
    r_{t+j-1}(s,a)
    +
    \sum_{s'} P_{t+j-1}(s'\mid s,a)\,
        V_{t+j}^*(s')
\right\} \\
&= V_{t+j-1}^*(s).
\end{align*}
Thus the claim holds for $j-1$, completing the backward induction. This completes the proof.    
\end{proof}

\noindent
By Lemma~\ref{lem:time_shift}, $\tilde{V}^{\star}_{\tau_{k(t)}} = V^{\star}_{t}$ 
and $\tilde{V}^{\star}_{\tau_{k(t)}+1} = V^{\star}_{t+1}$, so
\begin{align*}
    \mathrm{sp} \big(V^{\star}_{t+1} - V^{\star}_{\tau_{k(t)}+1}\big) 
    = \mathrm{sp} \big(\tilde{V}^{\star}_{\tau_{k(t)}+1} - V^{\star}_{\tau_{k(t)}+1}\big).
\end{align*}
Since $\mathcal{M}$ satisfies Assumption~\ref{ass:mixing}, we apply 
Lemma~\ref{lem:multi_stage_error} with $N = T - t - 1$, 
$\tilde{P}_\ell = P_{t + \ell - \tau_{k(t)}}$, $\tilde{r}_\ell = r_{t+\ell-\tau_{k(t)}}$, 
and using $\tilde{V}^{\star}_{\tau_{k(t)}+T-t} = V^{\star}_T \equiv 0$, to obtain
\begin{align*}
    \mathrm{sp} \big(V^{\star}_{\tau_{k(t)}} - \tilde{V}^{\star}_{\tau_{k(t)}} \big)
    &\leq \alpha^{\lfloor \frac{T-t}{m}\rfloor}\ \tilde{V} \\
    &\quad + 2\alpha^{\lfloor \frac{T-t}{m}\rfloor}\ 
    \sum_{i=0}^{T-t - \lfloor\frac{T-t}{m}\rfloor m} 
    \bigg( \bar{\varepsilon}_{\tau_{k(t)}, \tau_{k(t)}+\lfloor \frac{T-t}{m}\rfloor m+i}\ \tilde{V} 
    + \bar{\delta}_{\tau_{k(t)}, \tau_{k(t)}+\lfloor \frac{T-t}{m}\rfloor m+i} \bigg)\\
    &\quad + 2 \sum_{\ell=0}^{\lfloor \frac{T-t}{m}\rfloor - 1} \alpha^{\ell}\ 
    \sum_{i=0}^{m-1} 
    \bigg( \bar{\varepsilon}_{\tau_{k(t)}, \tau_{k(t)}+\ell m+i}\ \tilde{V} 
    + \bar{\delta}_{\tau_{k(t)}, \tau_{k(t)}+\ell m+i} \bigg).
\end{align*}

\medskip
\noindent
\textbf{Bounding term (III).} The difference
\begin{align*}
    Q^{\star}_{t} \big(s_t, \pi^{\mathrm{alg}}_{\tau_{k(t)}}(s_t)\big) 
    - Q^{\star}_{t}\big(s_t, \pi^{\mathrm{alg}}_{\tau_{k(t)}}(\hat{s}_{t}) \big)
\end{align*}
arises because the policy $\pi^{\mathrm{alg}}_{\tau_{k(t)}}$ is evaluated at $s_t$ versus 
$\hat{s}_{t}$, two potentially different states. Let 
$a = \pi^{\mathrm{alg}}_{\tau_{k(t)}}(s_t)$ and 
$\hat{a} = \pi^{\mathrm{alg}}_{\tau_{k(t)}}(\hat{s}_{t})$ 
denote the two actions. Expanding using the definition of $Q^\star_t$:
\begin{align*}
    &Q^{\star}_{t} \big(s_t, a\big) 
    - Q^{\star}_{t}\big(s_t, \hat{a}\big)\\
    &\quad = r_t(s_t, a) 
    + \mathbb{E}_{s' \sim P_t(\cdot \mid s_t, a)}\big[V^{\star}_{t+1}(s')\big]
    - r_t(s_t, \hat{a}) 
    - \mathbb{E}_{s' \sim P_t(\cdot \mid s_t, \hat{a})}\big[V^{\star}_{t+1}(s')\big]\\
    &\quad \leq \max_{a \in \mathcal{A}} r_t(s_t, a) - \min_{a \in \mathcal{A}} r_t(s_t, a)
    + \mathbb{E}_{s' \sim P_t(\cdot \mid s_t, a)}\big[V^{\star}_{t+1}(s')\big]
    - \mathbb{E}_{s' \sim P_t(\cdot \mid s_t, \hat{a})}\big[V^{\star}_{t+1}(s')\big]\\
    &\quad = \mathrm{sp}\big(r_t(s_t, \cdot)\big)
    + \mathbb{E}_{s' \sim P_t(\cdot \mid s_t, a)}\big[V^{\star}_{t+1}(s')\big]
    - \mathbb{E}_{s' \sim P_t(\cdot \mid s_t, \hat{a})}\big[V^{\star}_{t+1}(s')\big].
\end{align*}
For the remaining term, define
\[
    \mu(s') := P_t(s' \mid s_t, a) - P_t(s' \mid s_t, \hat{a}).
\]
Since both transition kernels are probability distributions,
\[
    \sum_{s' \in \mathcal{S}} \mu(s')
    =
    \sum_{s' \in \mathcal{S}} P_t(s' \mid s_t, a)
    -
    \sum_{s' \in \mathcal{S}} P_t(s' \mid s_t, \hat{a})
    =
    1-1
    =
    0.
\]
Hence, for any constant $c\in\mathbb{R}$,
\[
    c\sum_{s' \in \mathcal{S}}\mu(s')=0.
\]
Therefore,
\begin{align*}
    \sum_{s' \in \mathcal{S}} \mu(s') V^{\star}_{t+1}(s')
    &=
    \sum_{s' \in \mathcal{S}} \mu(s') V^{\star}_{t+1}(s')
    -
    c\sum_{s' \in \mathcal{S}}\mu(s')\\
    &=
    \sum_{s' \in \mathcal{S}} \mu(s')
    \big(V^{\star}_{t+1}(s') - c\big).
\end{align*}
Choosing 
\[
    c = \frac{1}{2}\big(\max_{s'} V^\star_{t+1}(s') + \min_{s'} V^\star_{t+1}(s')\big),
\]
so that 
\[
    |V^\star_{t+1}(s') - c| 
    \leq \frac{1}{2}\mathrm{sp}(V^\star_{t+1})
\]
for all $s'$, we obtain
\begin{align*}
    \sum_{s' \in \mathcal{S}} \mu(s') V^{\star}_{t+1}(s')
    &\leq \sum_{s' \in \mathcal{S}} |\mu(s')| 
    \cdot |V^{\star}_{t+1}(s') - c|\\
    &\leq \frac{1}{2}\mathrm{sp}(V^\star_{t+1}) 
    \sum_{s' \in \mathcal{S}} |\mu(s')|\\
    &= \mathrm{sp}(V^\star_{t+1})
    \|P_t(\cdot \mid s_t, a) - P_t(\cdot \mid s_t, \hat{a})\|_{\mathrm{tv}}\\
    &\leq \tilde{V}
    \max_{a,a' \in \mathcal{A}} 
    \|P_t(\cdot \mid s_t, a) - P_t(\cdot \mid s_t, a')\|_{\mathrm{tv}},
\end{align*}
where we used 
\[
    \sum_{s'}|\mu(s')| 
    =
    2\|P_t(\cdot\mid s_t,a) - P_t(\cdot\mid s_t,\hat{a})\|_{\mathrm{tv}}
\]
and $\mathrm{sp}(V^\star_{t+1}) \leq \tilde{V}$ in the last two steps.

\medskip
\noindent
\textbf{Combining all bounds.} Since $L_t(s_t) = Q^\star_t(s_t, a^\star_t) 
- Q^\star_t(s_t, a^{\mathrm{alg}}_t)$ and $\Delta_t(s_0)$ is the expectation 
of $L_t(s_t)$ over the trajectory induced by $\pi^{\mathrm{alg}}$, combining 
terms (I), (II), and (III) and taking the expectation yields
\begin{align*}
    \Delta_{t}(s_{0}) 
    &\leq \Delta_{\tau_{k(t)}}(s_0) 
    + \alpha^{\lfloor \frac{T-t}{m}\rfloor}\ \tilde{V} 
    + \bar{\delta}_{\tau_{k(t)}, t} 
    + \bar{\varepsilon}_{\tau_{k(t)}, t}\ \tilde{V}\\
    &\quad + 2\alpha^{\lfloor \frac{T-t}{m}\rfloor}\ 
    \sum_{i=0}^{T-t - \lfloor\frac{T-t}{m}\rfloor m} 
    \bigg( \bar{\varepsilon}_{\tau_{k(t)}, \tau_{k(t)}+\lfloor \frac{T-t}{m}\rfloor m+i}\ \tilde{V} 
    + \bar{\delta}_{\tau_{k(t)}, \tau_{k(t)}+\lfloor \frac{T-t}{m}\rfloor m+i} \bigg)\\
    &\quad + 2 \sum_{\ell=0}^{\lfloor \frac{T-t}{m}\rfloor - 1} \alpha^{\ell}\ 
    \sum_{i=0}^{m-1} 
    \bigg( \bar{\varepsilon}_{\tau_{k(t)}, \tau_{k(t)}+\ell m+i}\ \tilde{V} 
    + \bar{\delta}_{\tau_{k(t)}, \tau_{k(t)}+\ell m+i} \bigg)\\
    &\quad + \max_{s\in\mathcal{S}} \mathrm{sp} \big( r_{t}(s, \cdot) \big) 
    + \max_{s\in\mathcal{S}}\max_{a,a'\in\mathcal{A}} 
    \| P_{t}(\cdot \mid s, a) 
    - P_{t}(\cdot \mid s, a') \|_{\mathrm{tv}}\ \tilde{V},
\end{align*}
which concludes the proof of Lemma~\ref{lem:Q_value_diff_skip}.

\end{appendix}

\section{Experiment Details}\label{app:expt}

This appendix gives implementation details for the experiments in Section~\ref{sec:experiments}. All methods use the same update budget, the same finite-horizon planner, and the same propagated-MAP skip execution semantics. They differ only in the rule used to choose update times.

\subsection{Simulation Environment}
\label{app:sim_env}

The simulation environment is the Mars-rover navigation task from Example~\ref{ex:mars_rover}. The state space is an $n\times n$ grid,
$\mathcal{S}=\{0,\ldots,n-1\}\times\{0,\ldots,n-1\}$. Unless otherwise stated, the agent starts at $s_0=(0,0)$ and the goal is $g=(n-1,n-1)$. The action space is
$\mathcal{A}=\{\textsc{right},\textsc{left},\textsc{down},\textsc{up}, \textsc{stay}\}$. Let $F(s,a)$ denote the deterministic next grid cell obtained by applying action $a$ at state $s$, with moves clipped at the grid boundary.

The time-varying transition kernel is controlled by a slip parameter $\rho_t\in[0,0.6]$. At time $t$, the transition model is
\begin{equation}
\label{eq:app_slip_kernel}
P_t(s'\mid s,a)
=
(1-\rho_t)\mathbf{1}\{s'=F(s,a)\}
+
\frac{\rho_t}{|\mathcal{A}|}
\sum_{\tilde a\in\mathcal{A}}
\mathbf{1}\{s'=F(s,\tilde a)\}.
\end{equation}
Thus, with probability $1-\rho_t$ the commanded action is executed, while with probability $\rho_t$ the realized motion is induced by a uniformly random action.

The rollout reward is based on the realized next state,
\begin{equation}
\label{eq:app_reward_rollout}
R(s_t,a_t,s_{t+1})
=
-\frac{d_1(s_{t+1},g)}{20},
\qquad
d_1((i,j),(i',j'))=|i-i'|+|j-j'|.
\end{equation}
For dynamic programming, this transition-dependent reward is converted to its expected state-action reward under the model used for planning:
\begin{equation}
\label{eq:app_reward_expected}
r_{\widehat P}(s,a)
=
\sum_{s'\in\mathcal{S}}
\widehat P(s'\mid s,a)R(s,a,s').
\end{equation}

At each update time, the agent solves the finite-horizon planning problem from Section~\ref{sec:skip-update}. The update-time transition model is held fixed over the planning horizon, and the first-step policy map is reused until the next update. Near the end of the horizon, the effective planning horizon is
$H_t=\min\{H,T-t\}$.

\subsection{Drift Profiles and Reported Simulation Settings}
\label{app:drift_profiles}

We use three families of drift profiles for the slip parameter $\rho_t$.

\paragraph{Linear drift.}
Linear drift uses $\rho_t=\mathrm{clip}(\rho_0+ct,0,0.6)$, where $\rho_0$ is the initial slip and $c$ is the per-step drift rate.

\paragraph{Sinusoidal drift.}
Sinusoidal drift uses
\begin{equation}
\label{eq:app_sin_drift}
\rho_t
=
\mathrm{clip}\!\left(
\rho_0+A\sin\!\left(\frac{2\pi t}{p}\right),
0,0.6
\right),
\end{equation}
where $A$ is the amplitude and $p$ is the period.

\paragraph{Piecewise drift.}
Piecewise drift uses low-drift intervals separated by high-drift bursts. The profile is specified by an initial slip $\rho_0$, a quiet slope $c_{\mathrm{quiet}}$, and burst intervals $\mathcal{B}=\{(a_j,b_j,c_j)\}_j$. For horizon $T$, interval $(a_j,b_j,c_j)$ corresponds to integer times $[\lfloor a_jT\rfloor,\lfloor b_jT\rfloor)$ with slope $c_j$; all non-burst intervals use slope $c_{\mathrm{quiet}}$. In the reported piecewise settings, $\rho_0=0.03$ and $c_{\mathrm{quiet}}=0.001$.

For diagnostics and update scoring, we use the one-step transition-change signal
\begin{equation}
\label{eq:app_max_eps}
\epsilon_t
=
\max_{s,a,s'}
\left|
P_{t+1}(s'\mid s,a)-P_t(s'\mid s,a)
\right|.
\end{equation}
This is the maximum entrywise kernel change and is not multiplied by the state-action dimension.

Table~\ref{tab:app_reported_settings} lists the simulation settings reported in Table~\ref{tab:sim_all_baselines}. 

\begin{table}[h]
\centering
\scriptsize
\caption{\small Reported simulation settings. The column $n/T/H/B$ gives grid size, horizon, planning horizon, and update budget. Piecewise burst intervals are normalized by the horizon.}
\label{tab:app_reported_settings}
\resizebox{\linewidth}{!}{
\begin{tabular}{lcll}
\toprule
Setting & $n/T/H/B$ & Adaptive score & Drift profile \\
\midrule
Piecewise-Late Burst
& 14/80/8/2
& Gap
& piecewise, $\mathcal{B}=\{(0.36,0.50,0.014),(0.72,0.92,0.022)\}$ \\

Piecewise-Two-Burst
& 12/50/7/4
& Gap
& piecewise, $\mathcal{B}=\{(0.20,0.34,0.018),(0.62,0.82,0.018)\}$ \\

Piecewise-Alternating
& 12/50/7/4
& Drift
& piecewise, $\mathcal{B}=\{(0.14,0.24,0.020),(0.44,0.54,0.020),(0.74,0.86,0.016)\}$ \\

Sinusoidal-Medium
& 12/50/7/3
& Gap
& sinusoidal, $\rho_0=0.10$, $A=0.08$, $p=10$ \\

Piecewise-Narrow-Late
& 14/70/8/2
& Gap
& piecewise, $\mathcal{B}=\{(0.76,0.86,0.020)\}$ \\

Piecewise-Sparse-Late
& 12/60/7/4
& Gap+Drift
& piecewise, $\mathcal{B}=\{(0.30,0.44,0.015),(0.76,0.94,0.020)\}$ \\

Piecewise-Sparse-Late
& 12/50/7/5
& Gap+Drift
& piecewise, $\mathcal{B}=\{(0.30,0.44,0.015),(0.76,0.94,0.020)\}$ \\

Piecewise-Three-Short
& 12/60/7/6
& Gap+Drift
& piecewise, $\mathcal{B}=\{(0.12,0.22,0.016),(0.46,0.56,0.020),(0.76,0.88,0.018)\}$ \\
\bottomrule
\end{tabular}
}
\end{table}

\subsection{Budgeted Execution Semantics}
\label{app:budgeted_execution}

All methods use exact-budget execution. For budget $B$, each method uses exactly $B$ updates over the horizon, including the mandatory first update $\tau_0=1$. The transition at time $0$ is treated as a common initialization step shared by all methods; at $\tau_0=1$, the agent observes $(s_0,a_0,s_1)$, constructs the first update-time model, and computes the first policy used for budgeted execution. Thus, all update times reported in the experiments belong to $\{1,\ldots,T-1\}$.

At an update time $t$, the agent observes the current state, resets its internal state estimate, computes the transition model available at that time, replans, and spends one unit of budget:
\begin{equation}
\label{eq:app_update_semantics}
\hat{s}_t\leftarrow s_t,
\qquad
\widehat P_{t-1} \leftarrow P_{t-1}
\quad\text{in the controlled simulator},
\qquad
\pi_t^{\mathrm{alg}}\leftarrow \textsc{Plan}(\widehat P_{t-1},H_t).
\end{equation}

The assignment $\widehat P_{t-1}\leftarrow P_{t-1}$ reflects the controlled simulation implementation: the update-time model is instantiated from the known transition generator to isolate the effect of update scheduling. Conceptually, $\widehat P_{t-1}$ is the transition estimate available after observing the
transition ending at time $t$.
At a skip time, let $\tau=\max\{u\le t:u\in\mathcal{T}_{\mathrm{upd}}\}$ be the
most recent update time. The agent does not observe $s_t$, does not recompute
$\widehat P_{\tau-1}$, and does not replan. It selects

\begin{equation}
\label{eq:app_skip_action}
a_t=\pi_\tau^{\mathrm{alg}}(\hat{s}_t).
\end{equation}
The true state evolves under the current transition kernel,
\begin{equation}
\label{eq:app_true_evolution}
s_{t+1}\sim P_t(\cdot\mid s_t,a_t),
\end{equation}
while the internal state estimate is propagated using the stale update-time model:
\begin{equation}
\label{eq:app_map_propagation}
\hat{s}_{t+1}
=
\arg\max_{s'\in\mathcal{S}}
\widehat P_{\tau-1}(s'\mid \hat{s}_t,a_t).
\end{equation}
Thus, skipped steps use a stale policy and stale model for action selection and state propagation, while rewards and true dynamics are always computed using the current true state and current true transition kernel.

\subsection{Adaptive Score Instantiations}
\label{app:adaptive_scores}

The adaptive method uses the score $S_t$ from Eq.~\eqref{eq:score}. Let $\tau$ be the most recent update time. The theorem-level score components are
\begin{equation}
\label{eq:app_score_components}
R_t:=\alpha^{\lfloor (T-t)/m\rfloor}D,
\qquad
I_t:=\bar e_{\tau,t},
\qquad
A_t:=\mathcal{E}^{\mathrm{drift}}_{\tau,t},
\end{equation}
where $R_t$ is the residual-horizon term, $I_t$ is the immediate skip-drift term, and $A_t$ is the accumulated skip-drift term. The implemented score is
\begin{equation}
\label{eq:app_score}
S_t=w_1\phi(R_t)+w_2\phi(I_t)+w_3\phi(A_t),
\qquad
\phi(x)=\log(1+x).
\end{equation}

The constants $D$, $\alpha$, and $m$ are TVMDP-level quantities and are not
estimated analytically in the experiments. We treat their effect in the score as
part of the adaptive-score hyperparameterization and select the corresponding
values by empirical validation search before evaluating on held-out rollouts.

In the experiments, the drift profile is known through the bounded-drift model, and the one-step change signal $\epsilon_t$ in Eq.~\eqref{eq:app_max_eps} is used to instantiate the skip-drift terms. Since fixed positive scale factors can be absorbed into the score weights and threshold, the implementation uses the normalized proxies
\begin{equation}
\label{eq:app_score_proxy}
I_t
=
\sum_{\ell=\tau-1}^{t-1}\epsilon_\ell,
\qquad
A_t
=
m\frac{1-\alpha^{\lfloor (T-t)/m\rfloor}}{1-\alpha}\,I_t.
\end{equation}
When rewards are time-invariant, as in the simulation experiments, $\bar\delta_{\tau,t}=0$, so the skip-drift terms are fully determined by transition drift.

The score labels in Table~\ref{tab:sim_all_baselines} are shorthand for which terms in Eq.~\eqref{eq:app_score} are active. They are inherited from the implementation names and do not indicate a separate computation of two new finite-horizon value functions at every time step. Table~\ref{tab:app_adaptive_scores} lists the score instantiations used in the reported experiments.

\begin{table}[h]
\centering
\scriptsize
\caption{\small Adaptive score instantiations. Active weights are set to one and inactive weights to zero. The threshold $\lambda$ is the threshold used by the corresponding score family.}
\label{tab:app_adaptive_scores}
\begin{tabular}{lccccl}
\toprule
Adaptive score & $w_1$ & $w_2$ & $w_3$ & $\lambda$ & Active terms \\
\midrule
Gap
& 1 & 0 & 1 & 0.15
& residual horizon + accumulated skip drift \\

Drift
& 1 & 1 & 0 & 0.25
& residual horizon + immediate skip drift \\

Gap+Drift
& 1 & 1 & 1 & 0.30
& residual horizon + immediate skip drift + accumulated skip drift \\
\bottomrule
\end{tabular}
\end{table}

At time $t$, the adaptive rule triggers an update if $S_t\ge\lambda$ or if the pacing condition in Algorithm~\ref{alg:adaptive_update} requires an update to use the exact budget. The score family reported for each setting in Table~\ref{tab:sim_all_baselines} is selected using validation rollouts and then held fixed on the held-out evaluation rollouts.

\subsection{Baselines}
\label{app:baselines}

All baselines in Tables~\ref{tab:sim_all_baselines} and
\ref{tab:crazyflie_results} use the same exact update budget, update-time planner, and propagated-MAP skip execution semantics as the adaptive method. They differ only in the schedule used to choose $\mathcal{T}_{\mathrm{upd}}$.

\begin{table}[h]
\centering
\scriptsize
\caption{\small Baselines used in the main simulation and hardware comparisons.}
\label{tab:app_baselines}
\resizebox{\linewidth}{!}{
\begin{tabular}{lll}
\toprule
Main-text name & Schedule rule & Hyperparameter selection \\
\midrule
Per.
& Uniform periodic updates at rounded $\mathrm{linspace}(1,T-1,B)$ times.
& none \\

Per.-BO
& Periodic schedule with validation-selected phase/offset.
& offset selected on validation rollouts \\

Rand.
& Includes $t=1$, then samples $B-1$ update times uniformly from $\{2,\ldots,T-1\}$.
& rollout seed determines schedule \\

Front
& Uses update times $\{1,2,\ldots,B-1\}$.
& none \\

Back
& Uses update times $\{1,T-B+1,\ldots,T-1\}$.
& none \\

Drift
& Updates when accumulated transition change since the last update exceeds a threshold, with exact-budget pacing.
& threshold selected on validation rollouts \\

Self
& Forecasts the next time accumulated future drift will exceed a threshold and schedules the update there.
& threshold selected on validation rollouts \\

Lazy
& Updates when value-weighted model drift under the current reused plan exceeds a threshold.
& threshold selected on validation rollouts \\
\bottomrule
\end{tabular}
}
\end{table}

For \textsc{Per.-BO}, let $q=\lfloor (T-2)/(B-1)\rfloor$. Candidate schedules
have the form
\begin{equation}
\label{eq:app_periodic_offset}
\mathcal{T}_{\mathrm{upd}}(o)
=
\{1\}\cup\{1+o+kq: k\ge 0,\; 1+o+kq\le T-1\},
\end{equation}
for offsets $o\in\{1,\ldots,q\}$. If a candidate has fewer than $B$ unique
updates, it is filled with the latest unused times before $T$; if it has more
than $B$, it is truncated to the first $B$ times. The ordinary uniform periodic
schedule is also included in the candidate set, and the best offset is selected
on validation rollouts.

For \textsc{Drift}, let $\tau$ be the most recent update time and define
\begin{equation}
\label{eq:app_drift_accum}
D_t^\epsilon:=\sum_{\ell=\tau-1}^{t-1}\epsilon_\ell.
\end{equation}
The drift-threshold score is
\begin{equation}
\label{eq:app_drift_threshold_score}
D_t^\epsilon
+
c_{\mathrm{pace}}\frac{t-\tau}{L_t},
\qquad
L_t=\left\lceil\frac{T-t}{B-|\mathcal{T}_{\mathrm{upd}}|}\right\rceil.
\end{equation}
An update is triggered when this score exceeds a validation-selected threshold, unless exact-budget pacing forces an earlier update. Candidate thresholds are generated from
\begin{equation}
\label{eq:app_base_threshold}
\theta_{\mathrm{base}}
=
\frac{\sum_{t=0}^{T-2}\epsilon_t}{\max\{B-1,1\}},
\end{equation}
using multipliers $\{0.25,0.5,0.75,1.0,1.25,1.5\}$, together with
$\max_t\epsilon_t$ and $4T^{-1}\sum_t\epsilon_t$.

For \textsc{Self}, the next update after $\tau$ is scheduled as the first $t>\tau$ satisfying
\begin{equation}
\label{eq:app_self_trigger}
\sum_{\ell=\tau-1}^{t-1}\epsilon_\ell\ge \theta,
\end{equation}
where $\theta$ is selected on validation rollouts from the same threshold set as \textsc{Drift}. This baseline uses the known drift profile to forecast when the next update should occur.

For \textsc{Lazy}, let $W_{\tau,1}$ be the one-step-ahead value function from the finite-horizon plan computed at the most recent update $\tau$. The value-weighted drift score is
\begin{equation}
\label{eq:app_lazy_score}
\ell_t
=
\max_{s,a}
\left|
\sum_{s'}
\left(
P_t(s'\mid s,a)-P_{\tau-1}(s'\mid s,a)
\right)
W_{\tau,1}(s')
\right|.
\end{equation}
An update is triggered when $\ell_t\ge\theta$, with $\theta$ selected on validation rollouts. Candidate thresholds are generated from positive observed values of $\ell_t$ using the $20$th, $40$th, $60$th, and $80$th percentiles, together with the positive minimum and maximum. If all observed values are zero, the fallback threshold is $10^{-3}$.

The drift-aware baselines use drift or model-change signals available in the controlled experimental model. This makes them strong schedule baselines for isolating the value of update timing; all comparisons still enforce the same budget and skip execution semantics.

\subsection{Validation and Evaluation Protocol}
\label{app:validation_protocol}

All simulation methods are evaluated from the same fixed start state
$s_0=(0,0)$. The benchmark for empirical dynamic regret is the no-budget oracle, which has full knowledge of the complete transition sequence $\{P_t\}_{t=0}^{T-1}$ and observes the true state at every time step. Let $G_j^\star$ be the oracle return on held-out rollout seed $j$, and let $G_j^\mu$ be the return of budgeted method $\mu$ on the same seed. The reported empirical dynamic regret is
\begin{equation}
\label{eq:app_empirical_regret}
\widehat{\mathcal{DR}}^\mu(T)
=
\frac{1}{N}
\sum_{j=1}^N
\left(
G_j^\star-G_j^\mu
\right).
\end{equation}
Lower values are better. Table~\ref{tab:sim_all_baselines} reports
$N=1000$ held-out rollout seeds.

Adaptive score families, periodic offsets, and baseline thresholds are selected using validation rollouts disjoint from the held-out evaluation rollouts. After validation, the selected score family, offset, or threshold is fixed and evaluated on the held-out rollouts. Methods without tunable hyperparameters, such as \textsc{Per.}, \textsc{Rand.}, \textsc{Front}, and \textsc{Back}, are evaluated directly on held-out rollouts.

For the update-placement diagnostics in Table~\ref{tab:update_diagnostics}, high-drift intervals are defined from the drift profile. For piecewise profiles, the configured burst intervals are treated as high-drift intervals. For sinusoidal profiles, high-drift intervals are contiguous intervals where $\epsilon_t$ is in the top quartile for that setting. Let $\mathcal{U}:=\mathcal{T}_{\mathrm{upd}}\setminus\{1\}$ denote the set of non-initial update times and let $\mathcal{H}$ denote the set of high-drift intervals. Precision and coverage are
\begin{equation}
\label{eq:app_precision_coverage}
\mathrm{precision}
=
\frac{|\{u\in\mathcal{U}:u\in \cup_{I\in\mathcal{H}} I\}|}{|\mathcal{U}|},
\qquad
\mathrm{coverage}
=
\frac{|\{I\in\mathcal{H}:I\cap\mathcal{U}\neq\emptyset\}|}{|\mathcal{H}|}.
\end{equation}
Mean stale drift is the time average of $\sum_{\ell=\tau(t)-1}^{t-1}\epsilon_\ell$, where $\tau(t)$ is the most recent update before $t$. Max stale drift is the maximum of the same quantity over the
horizon. Diagnostics are averaged over the reported simulation settings and held-out rollouts.

\subsection{Hardware Experiment Details}
\label{app:hardware_details}

The hardware experiments use a Crazyflie quadrotor in an indoor navigation task. The workspace is discretized using the same grid abstraction as the simulation experiments, with $n/T/H/B=12/60/7/6$ for all methods. We evaluate three obstacle-density settings: no obstacles, sparse obstacles, and dense obstacles. Each density setting is averaged over $15$ independently generated layouts with the same start--goal structure.

The high-level scheduler and planner run offboard. The Crazyflie executes the commanded motion using its onboard low-level stabilization controller. The high-level policy produces discrete motion commands, which are converted into local position or velocity setpoints. During skip intervals, the controller reuses the most recently computed policy and propagates the internal state estimate using the propagated-MAP rule in Eq.~\eqref{eq:app_map_propagation}.
During update steps, the system obtains a fresh localization/state estimate, updates the transition model used by the planner, and replans.

Following the experimental setup of~\citet{puthumanaillam2024weathering}, we induce stochasticity by applying simulated wind disturbances that perturb the controller during execution. For a given layout and trial, all methods use the same obstacle arrangement and disturbance model, so differences in outcome are attributable to the update schedule.

Table~\ref{tab:crazyflie_results} reports three metrics. Success is the fraction of layouts in which the quadrotor reaches the goal within the horizon. Collision is the average number of obstacle safety-radius violations per layout. Cost is the normalized executed trajectory cost with collision penalties:
\begin{equation}
\label{eq:app_hardware_cost}
C
=
\frac{
L_{\mathrm{exec}}
+
\kappa_{\mathrm{coll}}N_{\mathrm{coll}}
}{
L_{\mathrm{ref}}
},
\end{equation}
where $L_{\mathrm{exec}}$ is the executed path length, $N_{\mathrm{coll}}$ is the
number of collision events, $L_{\mathrm{ref}}$ is the nominal shortest
collision-free path length for the layout, and $\kappa_{\mathrm{coll}}$ is fixed
across all methods and layouts. Higher success is better; lower collision and
cost are better.

The same baseline definitions and abbreviations from
Table~\ref{tab:sim_all_baselines} are used for hardware experiments. The
adaptive method uses the \textsc{Gap+Drift} score instantiation from
Table~\ref{tab:app_adaptive_scores}.

\end{document}